\documentclass[aps,pra,reprint,superscriptaddress, floatfix]{revtex4-2}

\usepackage{graphicx}
\usepackage{subcaption}
\usepackage{amsmath, amsthm}
\usepackage{amssymb}
\usepackage{hyperref}
\usepackage{braket}
\usepackage{xcolor}
\usepackage{algorithm}
\usepackage{algpseudocode}
\usepackage{placeins}
\newtheorem{theorem}{Theorem}
\newtheorem{proposition}{Proposition}

\newcommand{\bb}[1]{\mathbb{#1}}
\renewcommand{\cal}[1]{\mathcal{#1}}
\renewcommand{\rm}[1]{\mathrm{#1}}
\newcommand{\E}[1]{\mathbb{E}\left[ #1 \right]}
\renewcommand{\O}[1]{\mathcal{O}\left(#1\right)}

\begin{document}

\title{High-Resolution Tensor-Network Fourier Methods for Exponentially Compressed Non-Gaussian Aggregate Distributions}

\author{Juan José Rodríguez-Aldavero}
\email{jj.aldavero@csic.es}
\affiliation{Institute of Fundamental Physics IFF-CSIC, Calle Serrano 113b, Madrid 28006, Spain}
\author{Juan José García-Ripoll}
\affiliation{Institute of Fundamental Physics IFF-CSIC, Calle Serrano 113b, Madrid 28006, Spain}

\date{\today}

\begin{abstract}
    Characteristic functions of weighted sums of independent random variables exhibit low-rank structure in the quantized tensor train (QTT) representation, also known as matrix product states (MPS), enabling up to exponential compression of their fully non-Gaussian probability distributions. Under variable independence, the global characteristic function factorizes into local terms. Its low-rank QTT structure arises from intrinsic spectral smoothness in continuous models, or from spectral energy concentration as the number of components $D$ grows in discrete models. We demonstrate this on weighted sums of Bernoulli and lognormal random variables. In the former, despite an adversarial, incompressible small-$D$ regime, the characteristic function undergoes a sharp bond-dimension collapse for $D \gtrsim 300$ components, enabling polylogarithmic time and memory scaling. In the latter, the approach reaches high-resolution discretizations of $N = 2^{30}$ frequency modes on standard hardware, far beyond the $N = 2^{24}$ ceiling of dense implementations. These compressed representations enable efficient computation of Value at Risk (VaR) and Expected Shortfall (ES), supporting applications in quantitative finance and beyond.
\end{abstract}

\maketitle

\section{Introduction}
\label{sec:introduction}

Weighted sums of independent random variables constitute a basic probabilistic model, describing macroscopic behavior arising from the aggregation of microscopic stochastic components. These models arise in a wide range of applications. For instance, weighted discrete sums underpin standard credit risk models, where portfolio losses depend on the joint survival of multiple obligors~\cite{vasicek1987, glasserman2005, bluhm2016}, and appear in multi-state reliability analysis, where aggregate capacity depends on heterogeneous components with varying performance levels~\cite{lisnianski2010, li2012}. In continuous settings, sums of lognormal variables describe the total value of variable-income portfolios~\cite{osborne1959}, enable pricing financial derivatives~\cite{dufresne2004}, and characterize the signal power in wireless communication channels~\cite{fenton1960, schwartz1982, mehta2007}.

While conceptually simple, these models present significant analytical and computational challenges. Their probability distribution generally lacks a closed-form expression, and their evaluation involves multidimensional convolution integrals that are susceptible to the curse of dimensionality. Although the central-limit theorem (CLT) guarantees asymptotic convergence to a Gaussian distribution~\cite{feller1991}, this convergence is often slow and non-uniform, making it insufficient in many practical settings, particularly when estimating tail probabilities for skewed or heavy-tailed distributions.

Consequently, evaluating these models relies on specialized numerical methods. Typically, Monte Carlo sampling is used~\cite{robert1999, glasserman2004}, as it circumvents the curse of dimensionality, although it converges slowly in low-probability regions and requires a prohibitively large number of samples to resolve tail behavior accurately. For discrete models, recursive convolution methods provide an exact evaluation~\cite{panjer1981, wang1993}, though their performance is limited by the size of the model's support and becomes inefficient in general scenarios. A more sophisticated alternative relies on Fourier inversion of the model's characteristic function, effectively performing the multidimensional convolution in the frequency domain~\cite{abate1992, carrmadan1999}. While these methods have been adapted for discrete settings~\cite{fang2009, hong2013}, they are frequently hampered by persistent Gibbs oscillations, which arise from distributional discontinuities and preclude uniform convergence~\cite{tadmor2007, ruijter2015}. No existing method simultaneously achieves an accurate approximation of the exact, fully non-Gaussian target distribution while remaining scalable to larger, practically relevant system sizes.

In this work, we introduce a new algorithm that combines the Fourier spectral method with tensor-network techniques. The method represents the global characteristic function as a quantized tensor train (QTT)~\cite{oseledets2011, khoromskij2011}---also known as a matrix product state (MPS)~\cite{perez2006, verstraete2008, orus2014, cirac2021}---and recovers the probability distribution via Fourier inversion using superfast Fourier transforms~\cite{dolgov2012, chen2023, chen2024}. By exploiting the factorization of the global characteristic function under variable independence, we combine the local characteristic functions of the independent components via sequential QTT Hadamard products~\cite{garcia2021, garcia2023, rodriguez2024}. For discrete random variables, these local characteristic functions admit exact low-rank QTT representations given by sums of complex exponentials. While continuous random variables often exhibit a similar algebraic construction, tensor cross-interpolation (TCI) provides a general fallback~\cite{oseledets2010,savostyanov2011,dolgov2020,nunez2022,ritter2024}. To mitigate Gibbs oscillations, we apply spectral filtering directly to the tensor-network representation, ensuring pointwise convergence away from the discontinuities and providing rigorous global convergence guarantees~\cite{vandeven1991, gottlieb1997}. Whenever low-rank structure is present, our QTT-based approach achieves polylogarithmic time and memory complexity in the grid size, scaling as $\O{\log \varepsilon^{-1}}$ in the worst case and $\O{\log \log \varepsilon^{-1}}$ in the best case for a prescribed global accuracy $\varepsilon$. Crucially, the method captures fully non-Gaussian target distributions directly, with efficiency deriving from low-rank spectral structure rather than any Gaussian approximation.

We illustrate the performance of the algorithm on two non-trivial representative models. First, we benchmark weighted Bernoulli sums, whose discrete support grows exponentially for incommensurate weights, complicating numerical treatment. In smaller instances, these models remain incompressible and highly adversarial. However, they become compressible at $D \gtrsim 300$ components---a scale matching real-world credit portfolios and reliability models---where the bond dimension of the QTT characteristic function undergoes a sharp CLT-driven collapse driven by the multiplicative suppression of high-frequency modes, reducing computational requirements by orders of magnitude. This compressibility transition, which has not been previously reported, occurs at system sizes directly relevant to practical applications. Second, we consider sums of lognormal random variables, whose local components exhibit strong compressibility due to intrinsic smoothness, enabling polylogarithmic complexity in the grid size and efficiently reaching $N = 2^{30}$ frequency modes on standard hardware, far beyond the $N = 2^{24}$ ceiling of dense implementations. Moreover, the resulting compressed distributions enable the efficient computation of the Value at Risk (VaR)~\cite{jorion2007, embrechts2013b} and Expected Shortfall (ES)~\cite{artzner1999, acerbi2002} risk metrics directly in tensor-network form, underscoring practical applications in fields such as quantitative finance.

This work is structured as follows. Sect.~\ref{sec:problem} formulates the problem, introducing weighted-sum models and their convergence to the normal distribution. Sect.~\ref{sec:spectral} presents the spectral method and its convergence guarantees. Sect.~\ref{sec:qtt} develops the tensor-network spectral algorithm. Sect.~\ref{sec:results} benchmarks the method for discrete and continuous weighted-sum models, comparing it against conventional approaches. Finally, Sect.~\ref{sec:conclusion} concludes with a discussion on the results and future directions.

\section{Problem statement}
\label{sec:problem}

Let $X$ be a random variable defined as the weighted sum of $D$ independent real-valued components $\{X_d\}_{d=1}^D$,
\begin{equation}
    X = \sum_{d=1}^D w_d X_d,
    \label{eq:X}
\end{equation}
combined through fixed weights $w_d \in \bb{R}$. The probability density function of $X$ can be expressed as the multidimensional convolution of the individual densities $f_{X_d}$, given by
\begin{equation}
    f_X(x) = \int_{\bb{R}^D} \!\!\! \delta\left(x - \sum_{d=1}^D w_d x_d \right) \! \prod_{d=1}^D f_{X_d}(x_d) \ \rm{d} x_1 \dots \rm{d} x_D.
    \label{eq:f_X_convolution}
\end{equation}
Evaluating Eq.~\eqref{eq:f_X_convolution} presents significant analytical and numerical challenges. Closed-form expressions for $f_X$ exist only when the component variables belong to a restricted class of distributions---such as the Gaussian or Gamma families---for which the sum remains within the same distributional class. From a numerical perspective, Eq.~\eqref{eq:f_X_convolution} involves a $D$-dimensional integral that is susceptible to the curse of dimensionality when approached with standard quadrature methods. Moreover, the Dirac delta term imposes a global linear constraint that couples all variables, which prevents the application of techniques based on separability.

Often, the quantity of interest is the cumulative distribution function (CDF), defined as
\begin{equation}
    F_X(x) := \int_{-\infty}^x f_X(t) \ \rm{d}t = (f_X * \Theta)(x),
    \label{eq:F_X}
\end{equation}
where the last equality denotes convolution with the Heaviside step function. The CDF naturally encodes the quantiles of the aggregate variable and threshold exceedance probabilities. In risk analysis, where $X$ typically denotes the loss of a portfolio, it underpins Value at Risk (VaR), defined as the $\alpha$-quantile $F_X^{-1}(\alpha)$~\cite{jorion2007, embrechts2013b}, and Expected Shortfall (ES), which involves tail expectations beyond this quantile~\cite{artzner1999, acerbi2002}. However, since the CDF inherits the high-dimensional, non-separable structure of the underlying density, its numerical evaluation remains similarly challenging.

Weighted-sum models of the form~\eqref{eq:X} satisfy the central-limit theorem (CLT) under mild regularity conditions: as the number of components $D$ increases, the normalized sum converges in distribution to a Gaussian random variable, largely independently of the microscopic details of the component variables. This rate of convergence can be quantified by the following classical result~\cite{feller1991}.
\begin{theorem}{Berry--Esseen theorem}\\
Let $X_1, \ldots, X_D$ be independent random variables with $\E{X_d}=0$, variances $\sigma_d^2$, and finite third absolute moments $\E{|X_d|^3}$.  
Denote $\sigma^2 = \sum_{d=1}^{D} \sigma_d^2$ and $\rho = \sum_{d=1}^{D} \E{|X_d|^3}$.  
Then, the CDF $F_X$ of the standardized sum $X = \tfrac{1}{\sigma}\sum_{d=1}^{D} X_d$ satisfies
\[
    \sup_{x \in \bb{R}} |F_X(x) - F_G(x)| \le C \frac{\rho}{\sigma^3},
\]
where $F_G$ denotes the CDF of the standard Gaussian distribution and $C$ is a universal constant.
\label{thm:berry-esseen}
\end{theorem}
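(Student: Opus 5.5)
The proof goes through characteristic functions, via Esseen's smoothing inequality. Write $\varphi_d(t) = \E{e^{itX_d}}$ and $\varphi_X(t) = \prod_{d=1}^D \varphi_d(t/\sigma)$; the product form is exactly the factorization under independence discussed above. Let $\psi(t) = e^{-t^2/2}$ be the Gaussian characteristic function. The first step is Esseen's smoothing lemma. For any $T>0$, convolve both CDFs with a kernel whose Fourier transform is supported in $[-T,T]$ (for instance the Fejér-type density $\propto (1-\cos Tx)/x^2$). Then, using that $F_G$ has density bounded by $(2\pi)^{-1/2}$, one obtains
\[
    \sup_x |F_X(x) - F_G(x)| \le \frac{1}{\pi}\int_{-T}^{T} \frac{|\varphi_X(t) - \psi(t)|}{|t|}\, \rm{d}t + \frac{24}{\pi\sqrt{2\pi}\, T}.
\]
This step is standard Fourier analysis. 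The only care needed is the argument that the smoothing error in the CDF is controlled by the maximal density of $F_G$ times the kernel's spread.

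Choose $T = c\,\sigma^3/\rho$ for a small absolute constant $c$. The boundary term is then $\O{\rho/\sigma^3}$. We may assume $\rho/\sigma^3 \le c'$ small, since otherwise the bound holds trivially because the CDF difference is at most $1$. It remains to show that on $|t|\le T$,
\[
    |\varphi_X(t) - \psi(t)| \le K \frac{\rho}{\sigma^3} |t|^3 e^{-t^2/4}.
\]
Integrating $|t|^2 e^{-t^2/4}$ then gives a constant times $\rho/\sigma^3$. Expanding each factor with the third-order Taylor remainder gives
\[
    \varphi_d(t/\sigma) = 1 - \frac{\sigma_d^2 t^2}{2\sigma^2} + \theta_d \frac{\E{|X_d|^3}|t|^3}{6\sigma^3}, \quad |\theta_d|\le 1.
\]
By Lyapunov's inequality, $\sigma_d^3 \le \E{|X_d|^3}$, so $\sigma_d/\sigma \le (\rho/\sigma^3)^{1/3}$. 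Hence each $\sigma_d^2 t^2/\sigma^2$ is small when $|t|$ is at most a small multiple of $(\sigma^3/\rho)^{1/3}$.

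The main obstacle is the range $(\sigma^3/\rho)^{1/3} \lesssim |t| \le T$, where individual factors need not be close to $1$. For the inner range I would take logarithms, $\log\varphi_d = -\sigma_d^2t^2/(2\sigma^2) + \O{\E{|X_d|^3}|t|^3/\sigma^3}$, sum over $d$, and use $|e^z - 1| \le |z|e^{|z|}$. This gives the required bound there. For the outer range I would symmetrize. Let $X_d^s = X_d - X_d'$ with $X_d'$ an independent copy, so that $|\varphi_d|^2$ is the characteristic function of $X_d^s$. Then
\[
    |\varphi_d(s)|^2 \le 1 - \sigma_d^2 s^2 + \tfrac{4}{3}\E{|X_d|^3}|s|^3 \le \exp\!\left(-\sigma_d^2 s^2 + \tfrac43\E{|X_d|^3}|s|^3\right).
\]
Multiplying over $d$ yields $|\varphi_X(t)| \le e^{-t^2/3}$ for $|t| \le T$ when $c$ is small. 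In this range $|\varphi_X - \psi| \le 2e^{-t^2/3}$. Since $|t| \gtrsim (\sigma^3/\rho)^{1/3}$ implies $1 \le (\rho/\sigma^3)|t|^3$, this is again bounded by $K(\rho/\sigma^3)|t|^3 e^{-t^2/4}$ after adjusting constants. Combining the two ranges with the smoothing inequality proves the theorem with a universal $C$. Optimizing $C$ is not attempted.
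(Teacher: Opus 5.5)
Your proof is correct. The paper gives no proof of this theorem to compare against: it quotes Berry--Esseen as a classical result (citing Feller), adds Shevtsova's numerical range $0.4097 \le C \le 0.5583$, and uses it only to motivate the $\O{D^{-1/2}}$ rate.

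What you wrote is essentially the standard Esseen-type argument found in such references:
\begin{itemize}
\item Esseen's smoothing inequality with $T = c\,\sigma^3/\rho$, where you may take $\rho/\sigma^3$ small without loss of generality.
\item A logarithmic expansion of the factors $\varphi_d(t/\sigma)$ on the inner range $|t| \le a(\sigma^3/\rho)^{1/3}$. There Lyapunov's inequality $\sigma_d^3 \le \E{|X_d|^3}$ makes every factor uniformly close to $1$.
\item Symmetrization on the outer range. The one-sided bound $|\varphi_d(s)|^2 = \E{\cos(sX_d^s)} \le 1 - \sigma_d^2 s^2 + \tfrac43 \E{|X_d|^3}|s|^3$ holds for \emph{all} $s$, using $\E{|X_d - X_d'|^3} \le 8\E{|X_d|^3}$. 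It gives $|\varphi_X(t)| \le e^{-t^2/3}$ on $|t| \le T$ once $c \le 1/4$.
\end{itemize}
You correctly identified that the outer range is where a naive bound fails, because $\sigma_d |t|/\sigma$ need not be small for $|t|$ up to $T$.

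The steps you left implicit also check out. In the inner range, $\log\varphi_d(t/\sigma) = -\sigma_d^2 t^2/(2\sigma^2) + \O{\E{|X_d|^3}|t|^3/\sigma^3}$. This holds because the quadratic remainder $|1-\varphi_d|^2$ is controlled by $\sigma_d^4t^4/\sigma^4 \le a\,\E{|X_d|^3}|t|^3/\sigma^3$ and $\E{|X_d|^3}^2 t^6/\sigma^6 \le a^3\,\E{|X_d|^3}|t|^3/\sigma^3$. The factor $e^{|w|}$ from $|e^w-1| \le |w| e^{|w|}$ is then a bounded constant, since $|w| = \O{a^3}$ there. In the outer range, converting $2e^{-t^2/3}$ via $1 \le a^{-3}(\rho/\sigma^3)|t|^3$ costs only a constant.

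What this route yields is a self-contained proof with some universal but unoptimized $C$. It does not reproduce the sharp constants the paper quotes, which need much finer smoothing and moment estimates.
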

Typically, both the total variance $\sigma^2$ and the sum of third absolute moments $\rho$ scale linearly with $D$. Consequently, the deviations from normality decay at the rate $\O{D^{-1/2}}$, with the universal constant bounded by $0.4097 \leq C \leq 0.5583$~\cite{shevtsova2013}.

Despite this guaranteed asymptotic convergence, Gaussian approximations are often insufficient for practical applications. While Theorem~\ref{thm:berry-esseen} controls the deviations of the CDF, it allows significant errors in the density, particularly in low-probability regions. Moreover, the $\O{D^{-1/2}}$ convergence rate can be too slow for scenarios involving a moderate number of components, where the resulting approximation errors may exceed practical tolerance levels. 

These challenges motivate the development of specialized numerical methods. However, conventional approaches such as Monte Carlo sampling~\cite{robert1999,glasserman2004} (see Appendix~\ref{appendix:conventional-montecarlo}) and recursive convolution schemes~\cite{panjer1981, wang1993} (see Appendix~\ref{appendix:conventional-recursive}) exhibit intrinsic limitations. On the one hand, Monte Carlo methods converge with a statistical error that decays as $\O{S^{-1/2}}$ with the number of samples $S$, requiring a prohibitively large number of samples to accurately resolve low-probability regions. On the other hand, recursive convolution methods provide an exact evaluation for discrete models, but scale with the model's support size and become inefficient in general scenarios with exponentially dense and irregular supports. For these reasons, Fourier spectral methods provide an effective alternative, exploiting separability to map the $D$-fold convolution in Eq.~\eqref{eq:f_X_convolution} into an efficient pointwise product in the Fourier domain.  

\section{Fourier spectral method}
\label{sec:spectral}

Fourier spectral methods evaluate the probability distribution of a random variable by inverting its characteristic function. For the weighted-sum models defined in Eq.~\eqref{eq:X}, this approach effectively computes the $D$-fold convolution~\eqref{eq:f_X_convolution} directly in the frequency domain, where the convolution theorem transforms the high-dimensional integral into a product of one-dimensional components.

We briefly outline the main elements of this approach, which will serve as the foundation for the tensor-network formulation developed in Section~\ref{sec:qtt}.

\subsection{Characteristic function}
\label{sec:spectral-cf}

The characteristic function of a real-valued random variable $X$ is the Fourier transform of its probability density function, given by the expectation
\begin{equation}
    \phi_X(\omega) := \E{e^{i \omega X}},
    \label{eq:phi_X}
\end{equation}
where $\omega \in \mathbb{R}$ denotes the frequency variable.

Crucially, the characteristic function for weighted-sum models~\eqref{eq:X} admits the factorized representation
\begin{equation}
    \phi_X(\omega) = \prod_{d=1}^D \phi_{X_d}(w_d \omega),
    \label{eq:cf_factorization}
\end{equation}
where $\phi_{X_d}$ denotes the local characteristic function of the $d$-th variable, each of which is often available in closed form for classical parametric distributions.

The characteristic function uniquely determines both the probability density and the cumulative distribution function through inversion formulas such as the Gil--Peláez formula~\cite{gil1951}. Moreover, by Lévy's continuity theorem, this correspondence is continuous: convergence in distribution implies pointwise convergence of the associated characteristic functions~\cite{feller1991}. Consequently, in accordance with the central limit theorem, the global characteristic function of weighted-sum models converges as $D \to \infty$ to that of the Gaussian distribution,
\begin{equation}
    \phi_G(\omega) = \exp \left(i\mu\omega - \tfrac{1}{2}\sigma^2\omega^2\right),
    \label{eq:phi_X_normal}
\end{equation}
suggesting an increasingly smooth structure that can be exploited by tensor-network representations to achieve compressibility and computational efficiency~\cite{jobst2024}. Importantly, this low-rank structure manifests even in regimes where the underlying target distribution remains substantially non-Gaussian.

In numerical implementations, the frequency is discretized on a finite grid $\{\omega_k\}_{k=0}^{N-1} \subset [-\Omega, \Omega]$. The cutoff frequency $\Omega$ and the number of frequency modes $N$ jointly determine the accuracy and computational cost of the spectral reconstruction. As discussed in Appendix~\ref{appendix:frequency_discretization}, these parameters are related through the Shannon--Nyquist sampling theorem: resolving spectral content up to frequency $\Omega$ without aliasing requires $N \propto \Omega$ samples~\cite{shannon2006}. Consequently, the effective bandwidth of the distribution directly governs the computational complexity of its spectral reconstruction.

\subsection{Cumulative distribution function}
\label{sec:spectral-cdf}

The CDF can be obtained from the density through convolution with the Heaviside step function (see Eq.~\eqref{eq:F_X}). However, it is more efficient to evaluate it directly in Fourier space by applying the convolution theorem,
\begin{equation}
    F_X(x) = \cal{F}^{-1}\left( \phi_X(\omega) \cdot \cal{D}(\omega) \right),
\end{equation}
where $\cal{D} := \cal{F}(\Theta)$ denotes the Fourier transform of the Heaviside step function. Although this transform does not converge in the continuum and is only defined formally as a tempered distribution, numerical implementations on discrete frequency samples yield a well-defined representation known as the Dirichlet kernel~\cite{oppenheim1999}. 

A technical subtlety arises: standard implementations of the discrete Fourier transform (DFT) assume periodic boundary conditions and compute circular (wrapped-around) convolutions, whereas Eq.~\eqref{eq:F_X} requires a linear convolution. To avoid wrap-around artifacts, a zero-padding trick is used: the density is embedded into a larger grid by effectively doubling the number of frequency modes to $2N$, producing a periodic extension of the CDF in which the spurious artifacts occupy the second half of the grid and can be discarded after the inverse transform. On this extended discrete grid, the Dirichlet kernel admits the closed-form expression
\begin{equation}
    \cal{D}(\omega_k) = \sum_{j=0}^{N-1} e^{i \omega_k j} = \frac{\sin \left( \tfrac{N \omega_k}{2} \right)}{\sin \left( \tfrac{\omega_k}{2} \right)} e^{i \omega_k \frac{N-1}{2}},
    \label{eq:dirichlet_kernel}
\end{equation}
evaluated on the extended frequency grid $\{\omega_k\}_{k=0}^{2N-1}$. We note that this kernel corresponds to a first-order, rectangular integration rule. In practice, superior accuracy can be achieved by replacing this expression with higher-order schemes.

\subsection{Spectral filtering}
\label{sec:spectral-filtering}

The Fourier spectral method enables band-limited distributions to be reconstructed exactly from finitely many frequency samples, while distributions with rapidly decaying spectra admit converging approximations as the number of modes $N$ increases. However, distributions with jump discontinuities exhibit a non-decaying Fourier spectrum. This phenomenon manifests as persistent Gibbs oscillations, arising in neighborhoods of the discontinuities and preventing uniform convergence of the reconstruction.

Spectral filtering provides a well-established remedy for suppressing these oscillations~\cite{vandeven1991, tadmor2007}. In this approach, smooth cutoff functions $\sigma(\omega)$ are applied to the characteristic function, attenuating high-frequency components. In physical space, these functions act as mollifiers, broadening the oscillations while preserving the overall structure of the distribution. A fundamental result due to Gottlieb and Shu states that, sufficiently away from discontinuities, filtered spectral approximations of the CDF $F^{N, \sigma}(x_k)$ converge at rates determined by the smoothness of the filter~\cite{gottlieb1997}. Polynomial filters of order $q$ yield algebraic pointwise convergence of order $\O{N^{1-q}}$, while exponential filters achieve exponential convergence $\O{e^{-N}}$. 

Near discontinuities, pointwise convergence cannot be achieved. Instead, Gibbs oscillations persist within narrow regions whose width decreases linearly with increasing $N$. This yields a rigorous bound for the global reconstruction error in the $\rm{L}^p$ norm,
\begin{equation}
    \|F - F^{N,\sigma}\|_{\rm{L}^p} = \O{N^{-1/p}},
    \label{eq:F_N_convergence_normp}
\end{equation}
independently of the specific filter employed. In particular, the mean pointwise error, defined by the $\rm{L}^1$ norm, decays as $\O{N^{-1}}$. Further details of these convergence properties are discussed in Appendix~\ref{appendix:filtering_convergence}.

\subsection{Spectral reconstruction algorithm}
\label{sec:spectral-algorithm}

Combining the previous ingredients, the spectral reconstruction of the density for weighted-sum models~\eqref{eq:X} can be expressed as
\begin{equation}
    f_X^{N, \sigma}(x_k) = \cal{F}^{-1} \left[ \prod_{d=1}^D \phi_{X_d}(w_d \omega) \cdot \sigma(\omega) \right](x_k),
    \label{eq:f_N_sigma}
\end{equation}
with the spectral filter $\sigma(\omega)$ being optional and required only for distributions with jump discontinuities. 

Similarly, the CDF can be obtained by incorporating the (zero-padded) Dirichlet kernel, as
\begin{equation}
    F_X^{N, \sigma}(x_k) = \cal{F}^{-1} \left[ \prod_{d=1}^D \phi_{X_d}(w_d \omega) \cdot \sigma(\omega) \cdot \cal{D}(\omega) \right](x_k).
    \label{eq:F_N_sigma}
\end{equation}
These expressions provide an efficient procedure for evaluating the target density and CDF of weighted-sum models~\eqref{eq:X}. We summarize the procedure for computing the CDF in Algorithm~\ref{alg:dense_spectral}.
\begin{figure}[t]
  \begin{algorithm}[H]
    \caption{Dense-vector spectral CDF approximation of a weighted-sum model~\eqref{eq:X}.}
    \label{alg:dense_spectral}
    \begin{algorithmic}[1]
        \Require Support width $L$, number of frequency modes $N$, local characteristic functions $\phi_{X_d}(\omega_k)$ and weights $w_d$ for $d = 1, \ldots, D$, spectral filter $\sigma(\omega_k)$.
        \Ensure Spectral approximation $F_X^{N, \sigma}$ of the CDF $F(x)$ on a grid of size $N$.
        \Statex
        \Statex \textit{Define zero-padded Fourier parameters}
        \State $\Delta x \gets \frac{L}{N}, \quad \Omega \gets \frac{\pi}{\Delta x}, \quad \Delta \omega_{\text{pad}} \gets \frac{\Omega}{N}$
        \State $\omega_k \gets - \Omega + k \Delta \omega_{\text{pad}}, \quad k = 0, 1, \ldots, 2N-1$
        \Statex
        \Statex \textit{Construct the (filtered) characteristic function}
        \State $\phi_X \gets \sigma(\omega_k)$
        \For{$d=1$ \textbf{to} $D$}
            \State $\phi_X \gets \phi_X \cdot \phi_{X_d}(w_d \omega_k)$
        \EndFor
        \Statex
        \Statex \textit{Convolve with the Dirichlet kernel~\eqref{eq:dirichlet_kernel}}
        \State $\cal{D} \gets \sum_{j=0}^{N-1}e^{i \omega_k j}$
        \State $\phi_X \gets \phi_X \cdot \cal{D}$
        \Statex
        \Statex \textit{Inverse Fourier transform and undo zero-padding}
        \State $F_X^{N, \sigma} \gets \cal{F}^{-1}(\phi_X)$
        \State \Return $F_X^{N, \sigma}[k], \quad k = 0, \ldots, N$ \Comment{Discard second half}
    \end{algorithmic}
  \end{algorithm}
\end{figure}

As demonstrated by the numerical experiments shown in Section~\ref{sec:results}, the spectral method systematically outperforms Monte Carlo sampling, achieving quadratic speedups with comparable accuracy. However, resolving the characteristic function to the prescribed accuracy may require a large number of frequency modes, especially if the distribution presents jump discontinuities or is heavy-tailed, leading to substantial computational and memory costs. This motivates the tensor network formulation introduced in the following section, which exploits low-rank structure in Fourier space to achieve polylogarithmic complexity in $N$.

\section{QTT-based Fourier spectral method}
\label{sec:qtt}

We now develop a \emph{quantum-inspired} algorithm that formulates the spectral method of Section~\ref{sec:spectral} in terms of quantized tensor trains (QTT), enabling computational advantages whenever the characteristic function exhibits low-rank structure. The term ``quantum-inspired'' refers to the use of the MPS/QTT formalism originally developed for quantum many-body physics~\cite{perez2006,verstraete2008,orus2014,cirac2021}, here adapted as a purely classical numerical method. The characteristic function is encoded in QTT format by exploiting its factorized structure into local components (see Eq.~\eqref{eq:cf_factorization}), which are combined through a sequence of QTT Hadamard products.

The efficiency of the QTT framework stems from the low-rank structure of the global characteristic function in frequency space. In continuous models, this follows directly from the smooth, rapidly decaying local spectra, consistent with the known QTT compressibility of smooth functions~\cite{garcia2021, jobst2024}. In discrete models, compressibility arises from a concentration of spectral energy induced by multiplicative suppression of high-frequency quasiperiodic modes. Moreover, as $D$ increases, the local characteristic functions are sampled over a progressively narrower frequency region, becoming increasingly smooth and reducing the Hadamard construction cost. Together, these two mechanisms drive the spectrum toward a smooth, low-rank envelope. Importantly, this spectral concentration does not require the underlying distribution to be near Gaussian, enabling the method to capture the non-Gaussian features of the target distribution.

\subsection{Quantized tensor trains}
\label{sec:qtt-qtt}

\begin{figure}[t]
    \centering
    \includegraphics[width=0.8\linewidth]{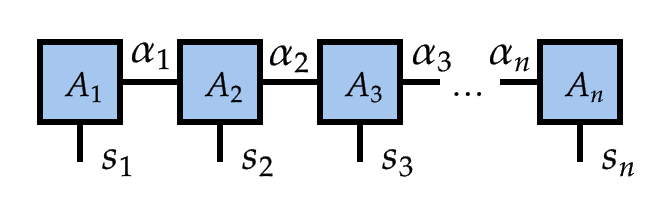}
    \caption{QTT diagram with $n$ rank-3 cores, carrying free indices $s_k$ of size $\ell_k$ and mutually contracted along virtual bonds $\alpha_k$ of dimension $\chi_k$.}
    \label{fig:qtt}
\end{figure}

Tensor trains (TT)~\cite{oseledets2011}, also known as matrix-product states (MPS)~\cite{perez2006, verstraete2008, orus2014, cirac2021}, decompose high-order tensors into a sequence of mutually contracted, lower-order tensors, known as TT cores. Let $A_{s_1 s_2 \ldots s_n}$ denote an order-$n$ tensor---i.e., a numerical array with indices $s_k \in \{0, \ldots, \ell_k - 1\}$. The storage and computational cost associated with such a tensor grows exponentially with $n$, quickly rendering direct numerical treatment infeasible. 

In the TT representation, $A$ is decomposed as
\begin{equation}
    A_{s_1 s_2 \ldots s_n} = \sum_{\alpha_1=1}^{\chi_1}\sum_{\alpha_2=1}^{\chi_2} \!\! \cdots \!\! \sum_{\alpha_{n-1}=1}^{\chi_{n-1}} \!\! A^{s_1}_{\alpha_1} A^{s_2}_{\alpha_1 \alpha_2} \cdots A^{s_n}_{\alpha_{n-1}},
    \label{eq:A_tt}
\end{equation}
with auxiliary indices $\alpha_k$ having dimension $\chi_k$, known as the bond dimensions or TT-ranks. As shown in Fig.~\ref{fig:qtt}, this decomposition is typically depicted graphically as a one-dimensional chain of boxes, with edges representing the auxiliary indices $\alpha_k$ and open legs corresponding to the free indices $s_k$.

By construction, the total number of parameters in the TT~\eqref{eq:A_tt} scales polynomially with the tensor order as $\O{n \ell \chi_{\max}^2}$, where $\ell := \max_k \ell_k$ and $\chi_{\max} := \max_k \chi_k$. When the bond dimensions remain moderate as $n$ grows, this representation provides an exponential reduction in storage and computational complexity compared to the explicit tensor $A_{s_1 s_2 \ldots s_n}$.

A quantized or ``quantics'' tensor train (QTT) refers to the TT decomposition of a discretized function after reshaping the grid index into an $\ell$-ary representation~\cite{khoromskij2011}. Let $x_s$ denote a continuous variable discretized on an equispaced grid over the interval $[a, b]$. The grid index $s$ can be mapped to a multi-index $(s_1, \ldots, s_n)$ through the $\ell$-ary encoding
\begin{equation}
    x_{s_1 s_2 \ldots s_n} = a + (b-a) \sum_{i=1}^n s_i \ell^{-i},
    \label{eq:x_ellary}
\end{equation}
where the digits satisfy $s_k \in \{0, \ldots, \ell_k - 1\}$. The resulting QTT---i.e., the TT representation of the tensor~\eqref{eq:x_ellary}---represents an equispaced grid with $N = \ell^n$ points using only $n = \log_\ell N$ indices. Consequently, increasing the number of QTT cores $n$ corresponds to an exponential refinement of the underlying grid. This construction extends naturally to multivariate functions $f_{s_1 s_2 \ldots s_n}$ by applying the encoding in Eq.~\eqref{eq:x_ellary} independently along each spatial dimension.

For convenience, we adopt the bra-ket notation commonly used in the quantum physics literature~\cite{orus2014, cirac2021}. In this notation, a QTT tensor $A_{s_1 s_2 \ldots s_n}$ is represented as a vector or ``ket'', $\ket{A}$, in a tensor-product Hilbert space,
\begin{equation}
    \ket{A} = \sum_{s_1=1}^{\ell_1} \cdots \sum_{s_n=1}^{\ell_n} A_{s_1 s_2 \ldots s_n} \ket{s_1} \otimes \cdots \otimes \ket{s_n},
    \label{eq:A_state}
\end{equation}
where $\{\ket{s_k}\}_{k=1}^{\ell_k}$ denotes the canonical basis associated with the $k$-th mode. Similarly, a ``bra'' vector $\bra{B}$ represents another element of the Hilbert space whose inner product $\braket{B | A}$ corresponds to the contraction of the tensors over all shared indices. 

Currently, MPS/QTT stands as a growing platform for \emph{quantum-inspired} algorithms~\cite{garcia2021}. These methods have been successfully applied across a wide range of scientific computing problems, including Fourier analysis~\cite{dolgov2012,garcia2021,garcia2022,chen2023}, high-dimensional differential equations~\cite{dolgov2012b, gourianov2022, garcia2023, gidi2024}, and multivariate numerical integration~\cite{dolgov2020, nunez2022, alexandrov2023}. When applicable, such approaches can yield substantial reductions in computational cost and memory requirements compared with conventional techniques.

\subsection{Finite-precision algebra}
\label{sec:qtt-algebra}

Tensor trains admit an algebra of finite-precision operations~\cite{garcia2021}. Since these operations generally increase their bond dimensions, they are generally followed by a truncation step to maintain computational efficiency, introducing a controlled approximation error. Several efficient truncation procedures exist, usually based on the TT-SVD algorithm~\cite{oseledets2010,stoudenmire2010,orus2014,paeckel2019}. We employ a two-core variational scheme that controls the truncation error in $\rm{L}^2$ norm through a tolerance parameter $\epsilon$~\cite{garcia2023,rodriguez2024}. The cost of the truncation step scales as $\O{\ell n \chi_{\max}^3}$.

In this work, we require two elementary QTT operations. The first is the Hadamard product, defined as the elementwise multiplication of two QTT tensors,
\begin{equation}
    \ket{u} \odot \ket{v} = \sum_{s_1,s_2,\ldots,s_n} u_{s_1s_2\ldots s_n}\, v_{s_1s_2\ldots s_n}\, \ket{s_1,s_2,\ldots, s_n}.
    \label{eq:qtt_hadamard}
\end{equation}
The resulting QTT presents a multiplicative bond dimension $\chi_u \cdot \chi_v$, requiring a subsequent truncation step. Although alternative approaches based on iterated core-swapping~\cite{michailidis2025} or sketching methods~\cite{sun2024,meng2026} have been proposed, in practice we found the direct approach to be simpler and reasonably competitive.

The second operation is the tensorized or ``superfast'' Fourier transform~\cite{dolgov2012}. This procedure maps a tensor train to its Fourier transform through a linear QTT operator $\widehat{\cal{F}}$, enabling large speedups when the input QTT has low rank. This operator is directly analogous to the quantum Fourier transform (QFT)~\cite{coppersmith2002, nielsen2010}, which can be implemented as a quantum circuit of unitary gates. Moreover, the QFT is known to exhibit low entanglement when the bit-reversal permutations are separated from the circuit~\cite{chen2023,chen2024}, enabling a low-rank operator representation.

For a more comprehensive description of the TT algebra and its numerical implementation, we refer the reader to Ref.~\cite{seemps2026}, which documents the numerical library \textsc{SeeMPS} used in this work.

\subsection{Fourier spectral approximation using QTT}
\label{sec:qtt-spectral}

We now formulate the Fourier spectral method of Sect.~\ref{sec:spectral} using tensor networks, enabling the evaluation of the density~\eqref{eq:f_N_sigma} and CDF~\eqref{eq:F_N_sigma} of weighted-sum models directly in QTT form. The method proceeds in two stages: first, it encodes the filtered characteristic function (Sect.~\ref{sec:qtt-spectral-cf}), followed by convolution with the Dirichlet kernel and Fourier inversion (Sect.~\ref{sec:qtt-spectral-convolution}).

\subsubsection{Construction of the filtered characteristic function}
\label{sec:qtt-spectral-cf}

The global characteristic function $\phi_X$~\eqref{eq:cf_factorization} of the model is encoded in QTT format by combining the local contributions $\ket{\phi_{X_d}}$ through a sequence of Hadamard products,
\begin{equation}
    \ket{\phi_X} = \prod_{d=1}^D \ket{\phi_{X_d}},
    \label{eq:qtt_cf_sum}
\end{equation}
which are optionally combined with a spectral filter $\ket{\sigma}$. 

As shown in Eq.~\eqref{eq:phi_X_multinomial}, discrete random variables with cardinality $K_d$ admit exact QTT characteristic functions $\ket{\phi_{X_d}}$ with bond dimension $K_d$. More generally, these local components can be encoded using tensor cross-interpolation~(TCI), a widely used method for constructing TT representations of black-box functions via adaptive sampling~\cite{oseledets2010,dolgov2020,ritter2024}. The spectral filter $\ket{\sigma}$, being a smooth unidimensional function of frequency, can be constructed in a stable and convergent manner via an orthogonal polynomial expansion in the Chebyshev basis, yielding uniformly accurate approximations with relatively low polynomial order~\cite{rodriguez2024}.

To prevent unbounded growth of the bond dimensions, truncation with tolerance $\epsilon$ is applied after each Hadamard product. To further enhance compressibility, the local characteristic functions are applied sequentially to the initial spectral filter $\ket{\sigma}$, producing the filtered characteristic function $\ket{\phi_X} \odot \ket{\sigma}$ directly without requiring a final filtering step. As shown in Sect.~\ref{sec:results-discrete}, this strategy empirically yields smoother bond dimension profiles and improved computational efficiency, since high-frequency components that would increase the bond dimension are dampened from the start.

Although we compute Eq.~\eqref{eq:qtt_cf_sum} sequentially, the product is commutative and its evaluation is embarrassingly parallel. In particular, as explained in Sect.~\ref{sec:qtt-complexity}, a binary-tree reduction scheme can be employed across $\O{D}$ independent processors to reduce the time complexity.

\subsubsection{Dirichlet convolution and Fourier inversion}
\label{sec:qtt-spectral-convolution}

The remaining algorithmic steps, which are purely structural and model-independent, consist of convolving the filtered characteristic function $\ket{\phi_X} \odot \ket{\sigma}$ with the Dirichlet kernel, followed by Fourier inversion to obtain the CDF in Eq.~\eqref{eq:F_N_sigma}.

To first order, the Dirichlet kernel~\eqref{eq:dirichlet_kernel} arises as the discrete Fourier transform of a unit step function. Accordingly, it admits the QTT representation
\begin{equation}
    \ket{\cal{D}} := \sum_{j=0}^{N-1} \ket{e^{i \omega j}}.
    \label{eq:qtt_dirichlet}
\end{equation}
which can be obtained by applying the Fourier QTT operator $\widehat{\cal{F}}$ to an exact rank-two encoding of the unit step. Higher-order kernels arise from the Fourier transform of periodic vectors, which can be implemented analogously to Eq.~\eqref{eq:qtt_dirichlet}.

Subsequently, the physical CDF is obtained through the tensorized inverse Fourier transform, yielding an expression
\begin{equation}
    \ket{F_X^{N,\sigma}} = \widehat{\cal{F}}^{-1} \left( \ket{\phi_X} \odot \ket{\sigma} \odot \ket{\cal{D}} \right)
    \label{eq:qtt_F_spectral}
\end{equation}
that is in complete analogy to the dense formulation in Eq.~\eqref{eq:F_N_sigma}.

These steps are summarized in Algorithm~\ref{alg:qtt_spectral}, relying on the following routines:
\begin{enumerate}
    \item \textsc{Truncate}: truncates the bond dimensions of a QTT via a variational TT-SVD routine with tolerance $\epsilon$.
    \item \textsc{SpectralFilter}: encodes the QTT of the chosen spectral filter using $n$ QTT cores and cutoff $\Omega$.
    \item \textsc{LocalCF}: encodes the QTT of the local characteristic function (e.g., via exact construction or TCI) using $n$ QTT cores and frequency cutoff $\Omega$.

\end{enumerate}

\begin{figure}[t]
  \begin{algorithm}[H]
    \caption{QTT-based spectral CDF approximation of a weighted-sum model~\eqref{eq:X}.}
    \label{alg:qtt_spectral}
    \begin{algorithmic}[1]
      \Require Support width $L$, number of cores $n$, local characteristic functions $\phi_{X_d}(\omega_k)$ and weights $w_d$ for $d = 1, \ldots, D$, spectral filter $\sigma(\omega_k)$, truncation tolerance $\epsilon$.
      \Ensure QTT approximation $\ket{F_X^{N,\sigma}}$ of the CDF $F(x)$ on a grid of size $N=2^n$.
      \Statex
      \Statex \textit{Define zero-padded Fourier parameters}
      \State $N \gets 2^n$, \quad $n_{\text{pad}} \gets n+1$, \quad $\Delta x \gets \frac{L}{N}$, \quad $\Omega \gets \frac{\pi}{\Delta x}$
      \Statex
      \Statex \textit{Construct the global characteristic function (Sect.~\ref{sec:qtt-spectral-cf})}
      \State $\ket{\sigma} \gets \textsc{SpectralFilter}(\sigma(\omega), \ n_{\text{pad}}, \ \Omega, \ \epsilon)$
      \State $\ket{\phi_X} \gets \ket{\sigma}$
      \For{$d = 1$ \textbf{to} $D$}
        \State $\ket{\phi_{X_d}} \gets \textsc{LocalCF}(\phi_{X_d}(w_d \omega), \ n_{\text{pad}}, \ \Omega)$
        \State $\ket{\phi_X} \gets \textsc{Truncate}\left(\ket{\phi_X} \odot \ket{\phi_{X_d}}; \ \epsilon\right)$
      \EndFor
      \Statex
      \Statex \textit{Convolve with the Dirichlet kernel (Sect.~\ref{sec:qtt-spectral-convolution})}
      \State $\ket{\cal{D}} \gets \sum_{j=0}^{N-1} \ket{e^{i \omega j}}$
      \State $\ket{\phi_X} \gets \textsc{Truncate}\left(\ket{\phi_X} \odot \ket{\cal{D}}; \ \epsilon\right)$
      \Statex
      \Statex \textit{Inverse Fourier transform and undo zero-padding}
      \State $\ket{F_X^{N, \sigma}} \gets \widehat{\cal{F}}^{-1} \ket{\phi_X}$
      \State \Return $(\bra{0} \otimes I^{\otimes n}) \ket{F_X^{N, \sigma}}$ \Comment{Project first core to $\ket{0}$}
    \end{algorithmic}
  \end{algorithm}
\end{figure}

\subsection{Computational complexity}
\label{sec:qtt-complexity}

The cost of the Fourier spectral approximation depends on whether the method is implemented using dense vectors or QTT tensors. In the dense setting, the cost scales algebraically with the number of frequency modes $N$, which becomes expensive when the spectrum decays slowly (e.g., in the presence of Gibbs oscillations). By contrast, when a compressible QTT representation is available, the cost scales linearly in the number of QTT cores $n = \log_2 N$, i.e., polylogarithmically in $N$.

\subsubsection{Complexity for dense-vector Fourier approximations}

The dense-vector spectral approximation described in Eq.~\eqref{eq:F_N_sigma} consists of four steps:
\begin{enumerate}
    \item \textbf{Characteristic function.}
    Evaluating $\phi_{X_d}(\omega)$ for $D$ independent components on $2N$ frequency samples costs $\O{DN}$.
    \item \textbf{Spectral filtering (optional).}
    Applying the filter $\sigma(\omega)$ costs $\O{N}$.
    \item \textbf{Convolution.}
    Multiplication by the Dirichlet kernel $\cal{D}(\omega)$ costs $\O{N}$.
    \item \textbf{Fourier inversion.}
    The inverse FFT dominates the computation with a cost $\O{N \log N}$.
\end{enumerate}
Combining these contributions yields the total cost
\begin{align}
    \rm{Time}(D, N) &= \O{DN + N\log N}, \\
    \rm{Memory}(N) &= \O{N}.
    \label{eq:computational_complexity_general}
\end{align}
The effective complexity thus depends on the scaling of the number of modes with the target tolerance, $N(\varepsilon)$.

\paragraph{Continuous densities.}
For continuous densities, the truncated spectral reconstruction reads
\begin{equation}
    f^{N}(x) = \frac{1}{2\pi} \int_{|\omega| \leq \Omega} \phi(\omega) \ e^{-i \omega x} \rm{d}\omega.
\end{equation}
Assuming a Gaussian decay of the characteristic function, $|\phi(\omega)| \leq e^{-\omega^2}$, the truncation error satisfies 
\begin{equation}
    \|f - f^{N}\|_{\rm{L}^\infty} \leq \frac{e^{-\Omega^2}}{\Omega}.
    \label{eq:density_error_continuous}
\end{equation}
This bound also controls the CDF error, implying an efficient scaling
\begin{equation}
    N(\varepsilon_\rm{cont}) = \O{\sqrt{\log \varepsilon_\rm{cont}^{-1}}}.
    \label{eq:convergence_N_cont}
\end{equation}

\paragraph{Discontinuous distributions.}
For distributions with jump discontinuities, Gibbs oscillations limit global accuracy. From Eq.~\eqref{eq:F_N_convergence_normp}, the global $\rm{L}^1$ error decays algebraically,
\begin{equation}
    N(\varepsilon_\rm{dis}) = \O{\varepsilon_\rm{dis}^{-1}},
    \label{eq:convergence_N_disc}
\end{equation}

Away from discontinuities, the convergence rate depends on the spectral filter. Polynomial filters of order $q$ yield
\begin{equation}
    N(\varepsilon_\rm{poly}) = \O{\varepsilon_\rm{poly}^{-1/q}}, 
    \label{eq:convergence_N_poly}
\end{equation}
while exponential filters achieve
\begin{equation}
    N(\varepsilon_\rm{exp}) = \O{\log \varepsilon_\rm{exp}^{-1}}. \label{eq:convergence_N_exp}
\end{equation}

These convergence rates illustrate the advantage of spectral methods over Monte Carlo sampling. Even in the worst Gibbs-dominated regime, spectral methods outperform Monte Carlo sampling, whose error scales as $\varepsilon(S) \sim S^{-1/2}$ and therefore requires $S(\varepsilon) = \O{\varepsilon^{-2}}$ samples (see Appendix~\ref{appendix:conventional-montecarlo}).

\subsubsection{Complexity for QTT-based Fourier approximations}

In the QTT formulation, the dominant cost arises from constructing the filtered characteristic function $\ket{\phi_X} \odot \ket{\sigma}$ in Eq.~\eqref{eq:qtt_cf_sum}.

Let $\bar{\chi} \ll \chi_{\max}$ denote the saturation bond dimension reached during the sequence of Hadamard products. The resulting complexity is
\begin{align}
\rm{Time}(D) &= \O{n D \bar{\chi}^3}\\
\rm{Memory}(D) &= \O{n \bar{\chi}^2},
\end{align}
reflecting the cubic cost of TT-SVD truncation and the linear dependence on the number of QTT cores. All remaining steps---Dirichlet convolution and tensorized Fourier transforms---operate within bounded-rank tensors and therefore require only subleading resources.

In the slowest convergence regime, Eq.~\eqref{eq:convergence_N_disc} implies $n = \O{\log \varepsilon_\rm{dis}^{-1}}$. Consequently, the complexity becomes
\begin{align}
\rm{Time}(D,\varepsilon_\rm{dis}) &= \O{\log \varepsilon_\rm{dis}^{-1} \ D \bar{\chi}^3},\\
\rm{Memory}(D,\varepsilon_\rm{dis}) &= \O{\log \varepsilon_\rm{dis}^{-1} \ \bar{\chi}^2}.
\end{align}
Analogously, in the fastest regime in Eq.~\eqref{eq:convergence_N_cont}, a $\O{\log \log \varepsilon_\rm{cont}^{-1}}$ complexity is achieved. Additionally, by employing a binary tree reduction scheme over $\O{D}$ independent processors, the time complexity with respect to the number of components can be reduced to $\O{\log D}$, yielding an effective parallel runtime of $\rm{Time}_\rm{parallel}(D) = \O{n \log D \ \bar{\chi}^3}$.

As is standard in tensor-network methods, the effective bond dimension $\bar{\chi}$ is not known a priori but is determined empirically by the truncation procedure itself. This verification is inexpensive: the algorithm reveals whether compression is effective as a byproduct of its execution, and incompressible regimes are detected immediately through bond-dimension growth. When the characteristic function is compressible and $\bar{\chi}$ remains bounded, the QTT spectral method achieves a polylogarithmic dependence in the target accuracy and can yield exponential savings compared with dense-vector spectral methods. In incompressible regimes, $\bar{\chi}$ grows to its maximal value and the complexity scaling approaches that of dense arrays.

\subsection{Post-processing operations}
\label{sec:qtt-postprocessing}

Several post-processing operations can be applied directly on compressed QTT representations without materializing the underlying dense vector. Two representative examples are quantile estimation (Sect.~\ref{sec:qtt-postprocessing-var}) and masked integration (Sect.~\ref{sec:qtt-postprocessing-es}), which allow the efficient computation of the Value at Risk~\cite{jorion2007,embrechts2013b} and Expected Shortfall~\cite{artzner1999,acerbi2002} risk metrics.

\subsubsection{Quantile estimation and Value at Risk}
\label{sec:qtt-postprocessing-var}

\begin{figure}[t]
  \begin{algorithm}[H]
    \caption{Binary search on a monotone function encoded as MPS.}
    \label{alg:qtt_binary_search}
    \begin{algorithmic}[1]
      \Require Binary QTT cores $\{A_k^{s_k}\}_{k=1}^n$ encoding a monotone function $F(x)$; target value $\alpha$.
      \Ensure Bit string $(s_1,\ldots,s_n)$ such that $F(x_{s_1\ldots s_n}) \approx \alpha$

      \Statex
      \Statex \textit{Precompute right environment vectors}
      \State $R_{n+1} \gets \mathbf{1}$
      \For{$k = n, n-1, \ldots, 1$}
        \State $R_k \gets A_k^{1} R_{k+1}$
      \EndFor
      \Statex
      \Statex \textit{Left-to-right binary search}
      \State $L \gets \mathbf{1}$
      \For{$k = 1, 2, \ldots, n$}
        \State $y_k \gets L \ A_k^{0} \ R_{k+1}$
        \State $s_k \gets 0$ \textbf{if} $y_k \ge \alpha$ \textbf{else} $1$
        \State $L \gets L A_k^{s_k}$
      \EndFor
      \State \Return $(s_1, s_2, \ldots, s_n)$

    \end{algorithmic}
  \end{algorithm}
\end{figure}

Given a CDF $F(x)$, quantile estimation at level $\alpha$ consists of evaluating $F^{-1}(\alpha)$, that is, the smallest grid point $x_k$ such that $F(x_k) \geq \alpha$. 

When the monotone function is represented in binary QTT form, this operation can be performed directly on the QTT cores through a left-to-right binary search. At each step, a partial contraction determines whether the next digit of the expansion should be set to zero or one, yielding the multi-index corresponding to the desired quantile. The procedure scales linearly with the number of tensor cores and is summarized in Algorithm~\ref{alg:qtt_binary_search}.

In risk analysis, the VaR~\cite{jorion2007, embrechts2013b} of a loss variable $X_{\rm{L}}$ is defined as the corresponding quantile of its CDF,
\begin{equation}
    \rm{VaR}_\alpha(X_\rm{L}) = F_{X_\rm{L}}^{-1}(\alpha).
    \label{eq:VaR}
\end{equation}
This quantity can be estimated directly from the compressed QTT representation of the CDF. The computational cost scales as $\O{n \chi_{\max}^2}$, slightly larger than the $\O{n}$ cost of binary search on dense arrays, but without the exponential cost of reconstructing the dense vector.

\subsubsection{Restricted integration and Expected Shortfall}
\label{sec:qtt-postprocessing-es}

Given a QTT encoding of a function $F(x)$ on a uniform grid $\{x_k\}_{k=0}^{N-1} \in [0, L]$, its integral reduces to an inner product with a QTT encoding of some quadrature weights $\{w_k\}_{k=0}^{N-1}$,
\begin{equation}
    \int_0^L F(x) \mathrm{d}x \ \approx \ \braket{w | F} = \sum_{k=0}^{N-1} w_k F(x_k),
\end{equation}
which can be evaluated with a cost $\O{\chi_F^2 +\chi_w^2}$. On equispaced grids, Newton--Cotes quadrature weights admit exact low-rank QTT representations with bond dimension proportional to the quadrature order~\cite{alexandrov2023,seemps2026}. Restricted integrals can be computed by masking the integrand with a step function, with minor adjustments to the boundary weights to avoid an $\O{N^{-1}}$ bias.

Expected Shortfall~\cite{artzner1999, acerbi2002} is defined as the tail expectation
\begin{equation}
    \rm{ES}_\alpha(X_\rm{L}) = \frac{1}{1-\alpha} \int_{\rm{VaR}_\alpha}^{\infty} [1-F_{X_\rm{L}}(x)] \rm{d}x.
\end{equation}
Using the QTT representation of the CDF, this quantity can be approximated as 
\begin{equation}
    \rm{ES}_\alpha(X_\rm{L}) \approx \frac{\Delta x}{1-\alpha} \bra{\Theta_{\rm{VaR}_\alpha}} \left(\ket{1} - \ket{F_{X_\rm{L}}^{N,\sigma}}\right),
\end{equation}
where $\ket{\Theta_{\rm{VaR}_\alpha}}$ denotes the QTT encoding of the unit step function supported beyond $\rm{VaR}_\alpha$~\eqref{eq:VaR}.

\section{Numerical results}
\label{sec:results}

We illustrate the performance of the proposed Fourier spectral algorithm on two representative problem classes. In both cases, the method approximates the exact, non-Gaussian target distributions, while the computational savings over the dense-vector method stem from the low-rank CF structure identified in Sect.~\ref{sec:qtt}.

As a representative discrete model, we consider weighted sums of Bernoulli random variables, a model known as the weighted Poisson--binomial (WPB) distribution~\cite{chen1997,fernandez2010,hong2013}.  Despite its simple definition, the WPB model exhibits an exponentially growing support when the weights are incommensurate, serving as a representative instance of more general discrete models with heterogeneous supports~\cite{daskalakis2015}. As mentioned in the introduction, such models are widespread in applications including credit risk aggregation~\cite{vasicek1987, glasserman2005, bluhm2016} and reliability analysis~\cite{lisnianski2010, li2012}. In our context, it serves as a natural testbed for studying QTT compressibility in the large-$D$ regime.

As a representative continuous model, we consider sums of lognormal random variables. This model is central to applications with multiplicative sources of uncertainty, such as equity portfolio risk analysis~\cite{osborne1959, dufresne2004} and interference modeling in wireless communication systems~\cite{fenton1960, schwartz1982, mehta2007}. From a numerical perspective, sums of lognormals constitute a classical challenging problem due to their heavy tails and the exponentially large spatial domain required for accurate resolution. Furthermore, its characteristic function is not available in closed form and is given only as an integral equation. We address this by applying Gauss--Hermite quadrature to Gubner's integral representation~\cite{gubner2006}, which reduces the characteristic function to a weighted sum of complex exponentials, yielding an exact algebraic QTT construction. More generally, when such explicit representations are not available, TCI provides a general fallback, requiring only black-box access to the characteristic function.

\subsection{Discrete models}
\label{sec:results-discrete}

Weighted sums of discrete components $X_d$ produce an aggregate random variable $X$ defined on a finite set $\{x_m\}_{m=0}^{M-1}$ of cardinality $M$. Its probability law is described by a probability mass function (PMF) $f_{X, m} := \Pr[X = x_m]$, satisfying $\sum_m f_{X,m} = 1$. Formally, it corresponds to the singular density
\begin{equation}
    f_X(x) = \sum_{m=0}^{M-1} f_{X,m} \delta(x - x_m),
    \label{eq:f_X_singular}
\end{equation}
which is ill-posed for numerical treatment. Instead, it is preferable to work with the CDF~\eqref{eq:F_X}, which takes the piecewise-constant form
\begin{equation}
    F_X(x) = \sum_{m=0}^{M-1} f_{X,m} \Theta(x - x_m),
    \label{eq:F_X_discrete}
\end{equation}
and provides a numerically stable target for computation.

In full generality, discrete models arise from linear combinations of independent \emph{categorical} components
\begin{equation}
    X_d \sim \text{Categorical}(\cal{X}_d, \cal{P}_d),
    \label{eq:X_d_categorical}
\end{equation}
where each variable takes $K_d$ values on a finite support $\cal{X}_d = \{x_{1d}, \ldots, x_{K_d d}\}$ with probabilities $\cal{P}_d = \{p_{1 d}, \ldots, p_{K_d d}\}$. 

The resulting probability distribution is obtained by summing the probabilities of all categorical configurations that produce the same aggregate outcome, which, except for a few highly structured cases, does not admit a closed-form expression. The support is generated by enumerating all configurations and may contain up to $\prod_d K_d$ distinct values, which grows exponentially with the number of components $D$ unless degeneracies occur in the weights.

The characteristic function of the categorical variable~\eqref{eq:X_d_categorical} admits the closed-form expression
\begin{equation}
    \phi_{X,d}(\omega) = \sum_{j=1}^{K_d} p_{j d} e^{i \omega w_d x_{j d}}.
    \label{eq:phi_X_multinomial}
\end{equation}
Being a weighted sum of complex exponentials, this expression admits an exact QTT representation with bond dimension $\chi_d = K_d$.

Discrete models encompass several classical discrete laws. In particular, when $K_d = 2$, the categorical variables reduce to Bernoulli indicators $I_d \in \{0, 1\}$, yielding the binomial and weighted Poisson--binomial (WPB) models.

\paragraph{Binomial model.}
The binomial model arises as the simplest discrete aggregate model~\cite{feller1991}, obtained from the sum of $D$ i.i.d. Bernoulli variables $I \sim \mathrm{Bernoulli}(p)$ with unit weights,
\begin{equation}
    X_{\rm{B}} = \sum_{d=1}^D \ I.
    \label{eq:X_B}
\end{equation}
Its distribution is highly degenerate, with equispaced support $\{0, 1, \ldots, D\}$ of cardinality $M = D+1$, and admits the closed-form PMF,
\begin{equation}
    f_{\rm{B},m} = \binom{D}{m} p^m (1-p)^{D-m}.
    \label{eq:f_B}
\end{equation}
This analytical tractability makes the binomial distribution a natural reference case for validating numerical methods.

\paragraph{Weighted Poisson-binomial model.}
Allowing heterogeneous probabilities and weights leads to the weighted Poisson--binomial (WPB) model~\cite{chen1997, fernandez2010},
\begin{equation}
    X_{\rm{WPB}} = \sum_{d=1}^D w_d I_d, \qquad I_d \sim \text{Bernoulli}(p_d).
    \label{eq:X_WPB}
\end{equation}
Unlike the binomial case, this model generally does not admit a closed-form PMF. The structure of its support depends on the commensurability of the weights. When all weights are equal, the support has minimal size $M=D+1$, while for generic incommensurate weights, it may reach the maximal size $M = 2^D$, becoming dense and irregular. Hence, the WPB model captures the full range of behaviors encountered in more general sums of categorical variables, from highly degenerate to maximally irregular regimes.

\subsubsection{Experimental setup}

\begin{figure}[t]
    \centering
    \includegraphics{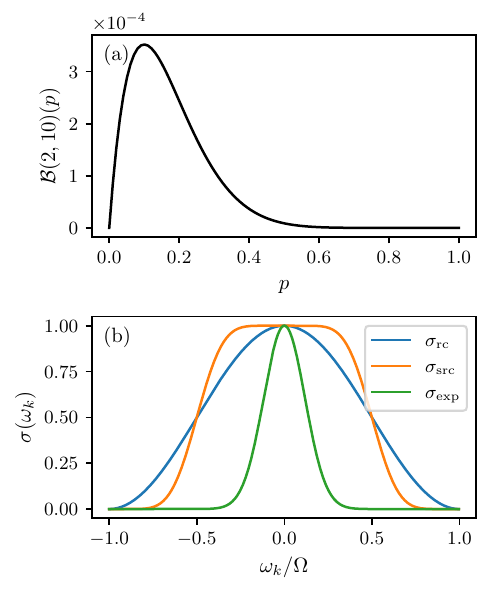}
    \caption{
        Experimental setting used for the discrete experiments.
        (a)~ Beta $\cal{B}(2, 10)$ distribution used to sample the weights.
        (b)~Spectral filters considered, including the raised cosine filter $\sigma_{\mathrm{rc}}$ (blue, $q=2$), sharpened raised cosine filter $\sigma_{\mathrm{src}}$ (orange, $q=8$), and exponential (Gaussian) filter $\sigma_{\mathrm{exp}}$ (green).
    }
    \label{fig:filters}
\end{figure}

We report numerical results for discrete models, focusing on instances of the weighted Poisson--binomial model~\eqref{eq:X_WPB} normalized to the unit interval, $X_{\rm{WPB}} \in [0, 1]$. In all experiments, the weights are drawn from a uniform distribution $w_d \sim \cal{U}(0, 1)$ and normalized such that $\sum_d w_d = 1$. The probabilities $p_d$ are sampled independently from a Beta distribution $\cal{B}(2, 10)$ shown in Fig.~\ref{fig:filters}a. The numerical results are averaged over ensembles of ten independent realizations, reporting both the mean behavior together with error bars for one standard deviation.

Since discrete distributions exhibit jump discontinuities in their CDF, spectral filtering is required to control Gibbs oscillations. We consider three representative filters, illustrated in Fig.~\ref{fig:filters}b: the raised cosine filter $\sigma_{\rm{rc}}$ of order $q=2$, its sharpened variant $\sigma_{\rm{src}}$ of order $q=8$, and a Gaussian filter $\sigma_{\rm{exp}}$, defined as
\begin{align}
    \sigma_{\rm{rc}}(\eta) &= \frac{1}{2}[1 + \cos(\pi \eta)], \label{eq:filter-rc}\\
    \sigma_{\rm{src}}(\eta) &= \left[\sigma_{\rm{rc}}^4 \ (35 - 84 \sigma_{\rm{rc}} + 70 \sigma_{\rm{rc}}^2 - 20 \sigma_{\rm{rc}}^3)\right](\eta), \label{eq:filter-src}\\
    \sigma_{\rm{exp}}(\eta) &= e^{-\alpha \eta^2}, \label{eq:filter-exp}
\end{align}
defined in terms of a normalized frequency $\eta := \omega / \Omega$, where $\alpha$ is determined so that $\sigma_{\rm{exp}}$ is below machine precision at $\eta = \pm 1$. As discussed in Sect.~\ref{sec:spectral-filtering}, the choice of filter determines the spectral convergence rate away from the jump discontinuities.

\subsubsection{Spectral convergence}

\begin{figure}[t]
    \centering
    \includegraphics{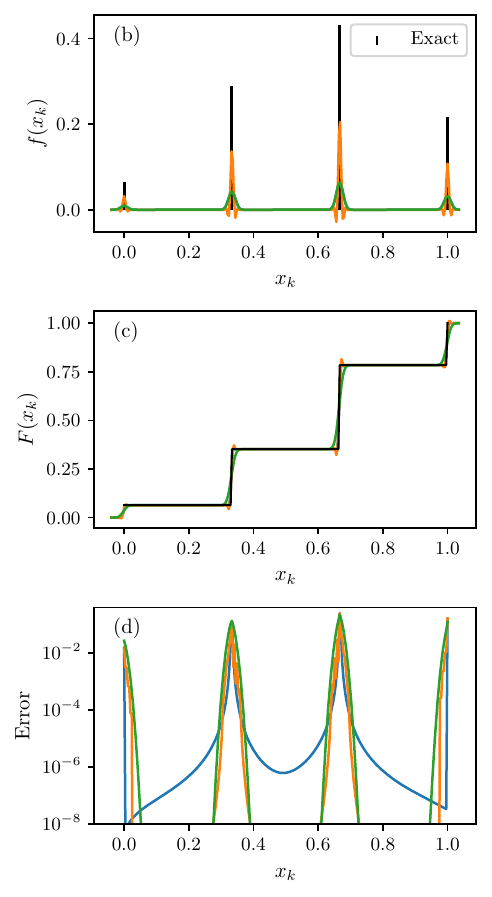}
    \caption{Spectral reconstruction of the binomial distribution with $D=3$ Bernoulli variables and 256 frequency modes, using the filters shown in Fig.~\ref{fig:filters}b.
    (a)~Exact PMF and spectral reconstructions. 
    (b)~Reconstructed CDF.
    (c)~Pointwise error of the CDF, showing Gibbs oscillations surrounding the jump discontinuities.}
    \label{fig:binomial_D3_distributions}
\end{figure}
\begin{figure}[t]
    \centering
    \includegraphics{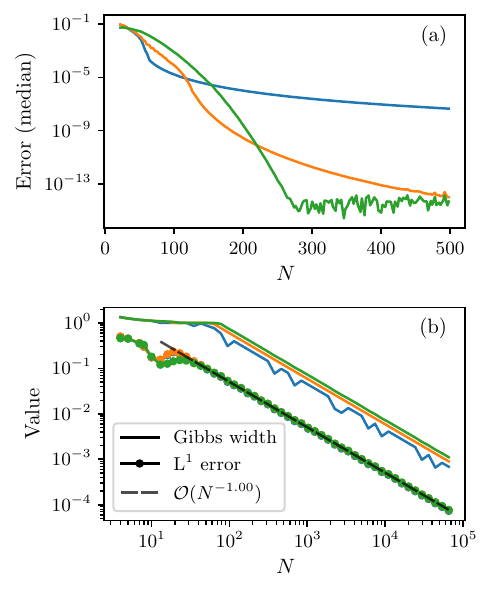}
    \caption{Error analysis for a binomial model with $D=3$ components. (a)~Decay of the median error (50\% quantile of pointwise errors), reflecting the filter-dependent spectral convergence rates in smooth regions. (b)~Decay of the global $\rm{L}^1$ error and the widths of the Gibbs bands.}
    \label{fig:binomial_D3_convergence}
\end{figure}

To build intuition, we first consider a binomial model~\eqref{eq:f_B} with $D=3$ i.i.d. Bernoulli variables. The PMF consists of four equispaced point masses, and the CDF exhibits four jump discontinuities. As expected, Gibbs oscillations dominate near discontinuities, while convergence in smooth regions improves with increasing filter order, as confirmed by the spectral reconstructions of the PMF and CDF for $N=256$ frequency modes and the pointwise CDF error (Figs.~\ref{fig:binomial_D3_distributions}a--c). 

The convergence with respect to $N$ follows the theoretical expectations. To isolate smooth-region behavior, we analyze quantiles of the pointwise error distribution. The median (50\% quantile) error converges algebraically for the two polynomial filters and exponentially for the Gaussian filter (Fig.~\ref{fig:binomial_D3_convergence}a). By contrast, the global error is governed by non-decaying Gibbs oscillations, reflected in the $\O{N^{-1}}$ decay of the global $\rm{L}^1$ error and the shrinking width of the Gibbs bands (Fig.~\ref{fig:binomial_D3_convergence}b).

\begin{figure}[t]
    \centering
    \includegraphics{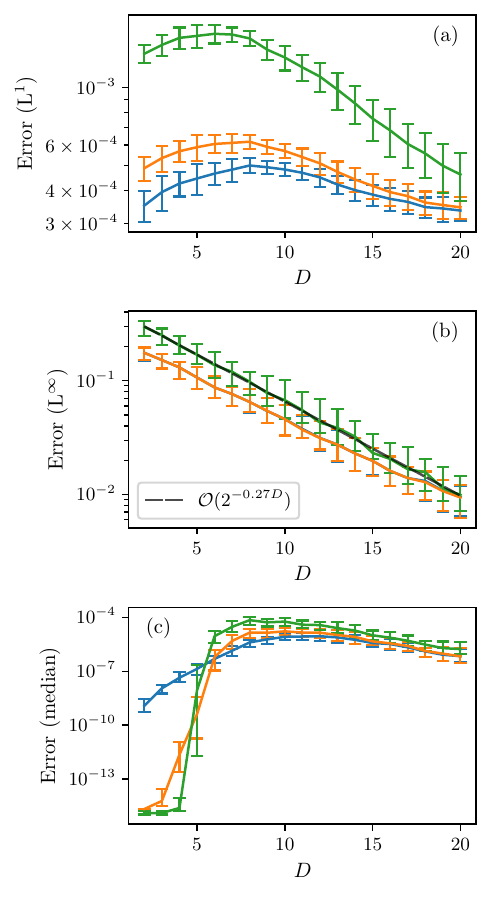}
    \caption{Error analysis for WPB models of increasing size $D$. (a)~Error in the global $\rm{L}^1$ norm, largely independent of $D$. (b)~Error in the pointwise $\rm{L}^\infty$ norm, showing an exponential decay of Gibbs oscillation amplitudes due to diminishing jump heights. (c) Median error, deteriorating with increasing $D$ as the measure of the smooth region shrinks.}
    \label{fig:poisson_dense_convergence}
\end{figure}

\begin{figure}[t]
    \centering
    \includegraphics{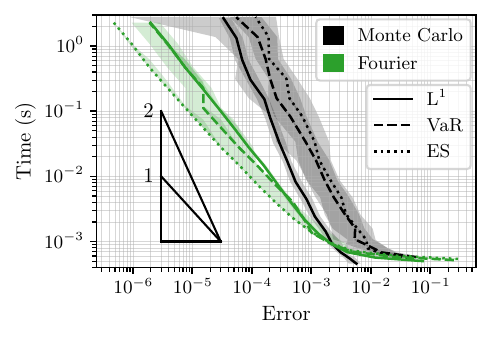}
    \caption{Performance comparison between spectral and Monte Carlo methods for the reconstruction of the WPB CDF (in $\rm{L}^1$ norm) and computation of $99\%$ VaR and ES. Exact reference solutions are obtained via recursive convolution (see Appendix~\ref{appendix:conventional-recursive}). A convergence triangle is shown,  illustrating the expected quadratic advantage of spectral methods.}
    \label{fig:poisson_dense_performance}
\end{figure}

We next consider the weighted Poisson--binomial model for larger instances with an increasing number of components $D$. We restrict to moderate sizes ($2 \leq D \leq 20$) in order to compute exact reference solutions using the recursive convolution algorithm (see Appendix~\ref{appendix:conventional-recursive}). Despite the exponential growth of the support ($M = 2^D$), the global $\rm{L}^1$ error remains essentially independent of $D$ (Fig.~\ref{fig:poisson_dense_convergence}a). Although the number of discontinuities grows exponentially with $D$, the amplitude of each jump decays exponentially due to normalization of the CDF on $[0, 1]$ (Fig.~\ref{fig:poisson_dense_convergence}b), so their cumulative contribution to the global error remains bounded. By contrast, the accuracy at smooth regions deteriorates with increasing $D$, since their effective measure shrinks exponentially (Fig.~\ref{fig:poisson_dense_convergence}c). Consequently, for sufficiently large $D$, pointwise accuracy becomes dominated by the global $\rm{L}^1$ error.

Finally, spectral methods exhibit the expected $\O{\varepsilon^{-1}}$ complexity, yielding a clear quadratic advantage over the $\O{\varepsilon^{-2}}$ complexity of Monte Carlo sampling for both CDF reconstruction and computation of tail-risk metrics (Fig.~\ref{fig:poisson_dense_performance}). This is in full agreement with the theoretical convergence rates derived in Sect.~\ref{sec:qtt-complexity}.

\subsubsection{QTT compression in the large-$D$ regime}

\begin{figure}[t]
    \centering
    \includegraphics{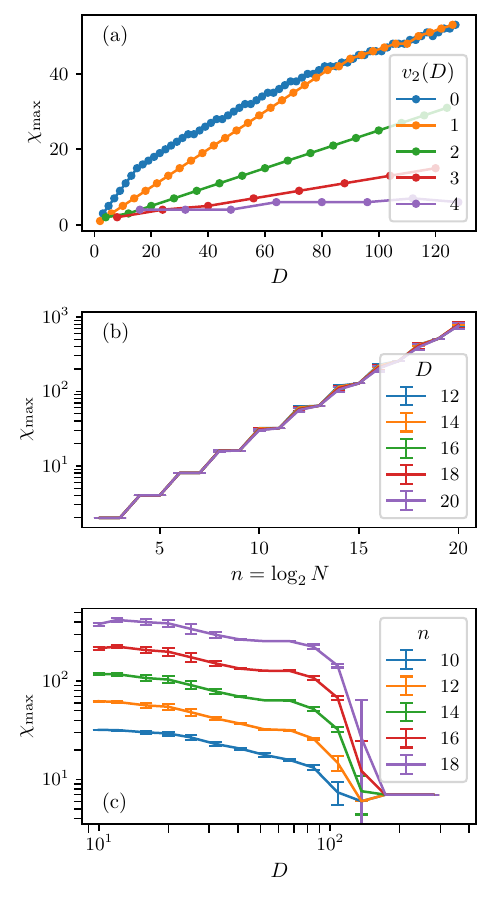}
    \caption{
        Compressibility of the QTT characteristic function for discrete models.
        (a)~Binomial model with $n=20$ cores, determined by the 2-adic valuation $v_2(D)$.
        (b)~WPB model averaged over 10 random instances, showing near-maximal rank growth at small $D$.
        (c)~WPB model, showing a sharp bond-dimension collapse with increasing $D$ and entering a compressible low-rank regime for $D \gtrsim 300$.
    }
    \label{fig:poisson_qtt_cf_maxbond}
\end{figure}

\begin{figure}[t]
    \centering
    \includegraphics{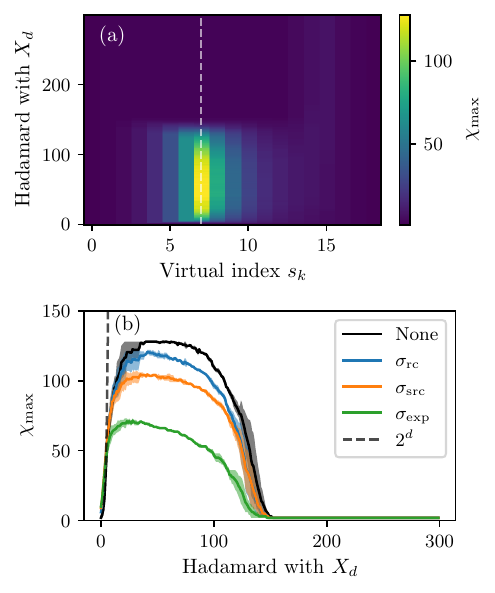}
    \caption{Compressibility of the WPB characteristic function~\eqref{eq:phi_X_multinomial} with $n=18$ and $D=300$. (a) Heat map showing the maximum bond dimension at each virtual index $s_k$ after incorporating each variable $X_d$. (b) Slice of the previous heat map at $s_7$ averaged over 10 independent random instances, showing an initial filter-dependent exponential barrier followed by strong compression.}
    \label{fig:poisson_mps_cf_maxbond_2D}
\end{figure}

\begin{figure*}[t]
    \centering
    \includegraphics[width=\textwidth]{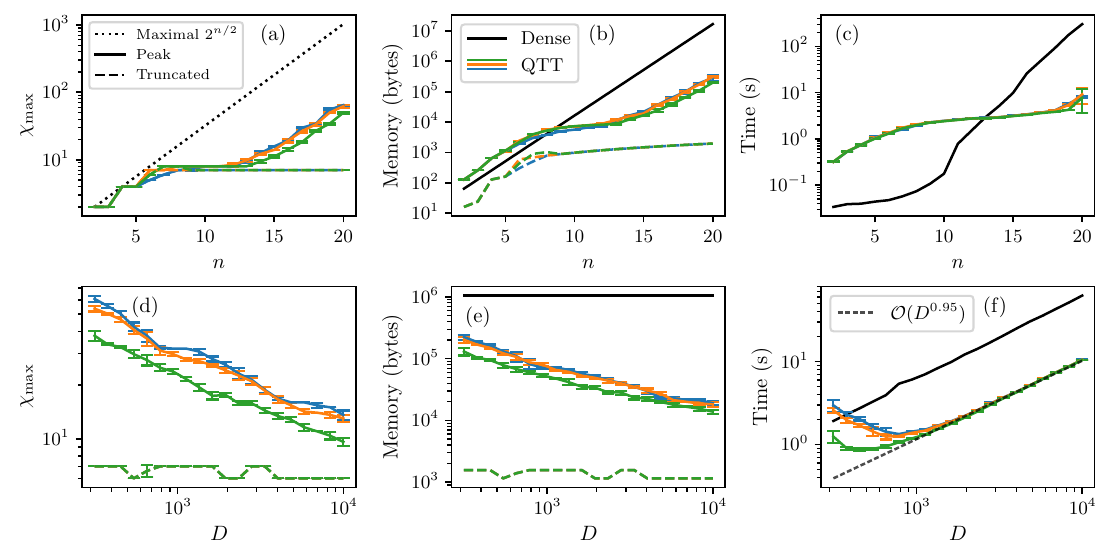}
    \caption{Performance comparison between QTT and dense-vector spectral methods for the computation of the filtered characteristic function. Top row (fixed $D=3000$, varying $n$): (a) peak and truncated bond dimensions, well below the maximal $2^{n/2}$ exponential value; (b) memory usage in bytes; (c) wall-clock time. Bottom row [(d)-(f)] show the same metrics at fixed $n=16$ while varying $D$.}
    \label{fig:poisson_qtt_cf_performance}
\end{figure*}

\begin{figure*}[t]
    \centering
    \includegraphics[width=\textwidth]{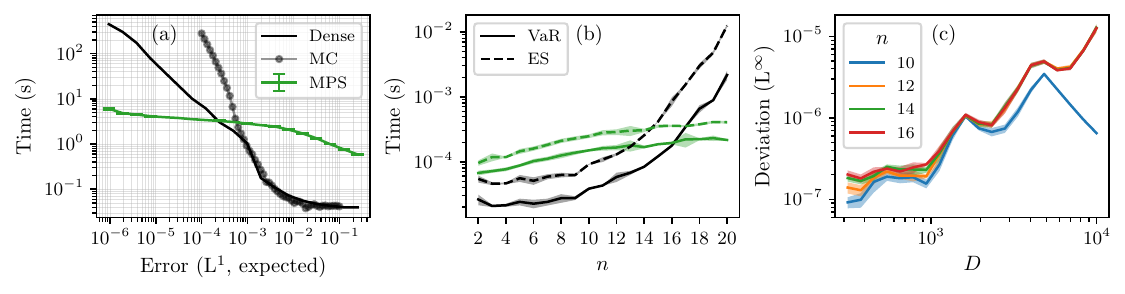}
    \caption{Performance comparison between QTT and dense-vector spectral methods for the computation of the CDF, VaR, and ES.
    (a) Wall-clock time required to reach a prescribed $\rm{L}^1$ tolerance for Monte Carlo, dense (black) and QTT (green) spectral methods.
    (b) Wall-clock time required to compute VaR and ES given a CDF in dense-vector or QTT format.
    (c) Maximum pointwise deviation ($\rm{L}^\infty$) between the dense-vector and QTT representations of the model's CDF.
    }
    \label{fig:poisson_qtt_cdf_performance}
\end{figure*}

We now turn to much larger systems, where the recursive convolution algorithm becomes intractable and where the QTT spectral representation achieves computational advantages over dense vectors. We study the maximal bond dimension of the characteristic function, being the key quantity governing computational cost (see Sect.~\ref{sec:qtt-complexity}).

Structured models can exhibit strong QTT compressibility, determined by the commensurability of the model's support with the frequency samples. We illustrate this for the binomial model, whose commensurability is controlled by the parity structure of $D$, conveniently summarized by the 2-adic valuation $v_2(D)$ (i.e., the number of factors of 2 dividing $D$). For $n=20$ QTT cores, the maximal bond dimension of the binomial characteristic function forms well-separated complexity classes for different values of $v_2(D)$ (Fig.~\ref{fig:poisson_qtt_cf_maxbond}a).

However, for small and moderate $D$, unstructured discrete models are highly adversarial and effectively incompressible. In this regime, the characteristic function becomes a quasiperiodic superposition of incommensurate phases, destroying scale separability and yielding near-maximal bond dimensions. This behavior is illustrated in Fig.~\ref{fig:poisson_qtt_cf_maxbond}b, where bond dimensions grow exponentially with resolution. In this regime, TT-SVD truncation is not effective, and the QTT size approaches that of dense vectors.

A qualitative transition occurs once $D$ is sufficiently large, driven by the multiplicative suppression of high-frequency modes. As shown in Appendix~\ref{appendix:frequency_discretization}, the modulus of the global characteristic function is bounded by a smooth Gaussian envelope centered at zero frequency. Up to frequencies of order $\O{D}$, this envelope dominates, effectively concentrating the spectral weight. Beyond this scale, the incommensurate phases of the local contributions begin to decorrelate. Although these quasiperiodic modulations could produce revivals with $\O{1}$ amplitude, their expected amplitude typically decreases exponentially in $D$, falling well below the TT-SVD truncation tolerance $\epsilon$. This CLT-driven multiplicative suppression concentrates spectral energy within a smooth envelope, enabling strong QTT compression while preserving non-Gaussian features. Consistently, Fig.~\ref{fig:poisson_qtt_cf_maxbond}c shows a sharp bond-dimension collapse at $D$ of a few hundred, beyond which the representation enters a resolution-independent low-rank regime.

This emergent compressibility translates directly into favorable computational performance. In this compressible large-$D$ regime, for $D \gtrsim 300$, both memory usage and runtime scale polylogarithmically with the number of frequency modes, yielding orders-of-magnitude savings relative to dense spectral methods. We make this explicit by focusing on a larger system with $D=3000$ variables, where the computational advantages of the QTT framework are most pronounced. For smaller $D$, a similar qualitative behavior is observed, although the performance gap is reduced.

Dense-vector and QTT-based CF representations are compared as a function of the frequency modes $n = \log_2 N$ in Fig.~\ref{fig:poisson_qtt_cf_performance}. For a fixed number of components $D$, peak bond dimensions initially saturate at a low constant up to a certain resolution threshold, beyond which they scale exponentially with $n$---reflecting an intermediate complexity barrier. Ultimately, the final dimensions after truncation collapse to small, resolution-independent values (Fig.~\ref{fig:poisson_qtt_cf_performance}a). This compression yields orders-of-magnitude memory savings---kilobytes versus tens of megabytes at $n=20$, despite the exponential growth of peak intermediate memory (Fig.~\ref{fig:poisson_qtt_cf_performance}b)---and translates directly into a polynomial wall-clock time scaling, in contrast to the exponential scaling of dense-vector approaches (Fig.~\ref{fig:poisson_qtt_cf_performance}c).

Interestingly, increasing the number of components $D$ enhances intermediate compressibility, reducing peak bond dimensions and associated memory and runtime costs until a linear saturation is reached. Consequently, peak bond dimension scaling does not pose a limitation for large-$D$ scalability. This behavior arises from a smoothing of the local characteristic functions, which are evaluated over an effective frequency window of size $\O{1/D}$. This mechanism is invariant to weight normalization, since the relevant argument depends only on the product $w_d \omega$. In the normalized case, the weights scale as $w_d = \O{1/D}$ with fixed cutoff $\Omega$, while in the unnormalized case the support length grows as $L = \O{D}$, inducing a contraction of the frequency range $\Omega = \O{1/D}$. Hence, both parametrizations sample the same effective frequency region. Since characteristic functions satisfy $\phi(0) = 1$ and remain smooth near the origin, this enforced locality yields increasingly low-rank initial tensors and suppresses bond dimension growth during early Hadamard products. 

We next benchmark computational cost as a function of target accuracy. Since exact reference distributions are unavailable at large $D$, errors are inferred from algorithmic convergence rates. Pointwise deviations between the QTT and dense CDF representations confirm that computational gains do not come at the expense of accuracy (Fig.~\ref{fig:poisson_qtt_cdf_performance}c). Wall-clock times to reach a prescribed global $\rm{L}^1$ tolerance for dense, QTT-based, and Monte Carlo methods are compared in Fig.~\ref{fig:poisson_qtt_cdf_performance}a.

At moderate accuracies, dense spectral methods outperform Monte Carlo sampling. As the tolerance $\varepsilon$ is tightened, a crossover occurs at $\varepsilon \approx 10^{-3}$, below which QTT-based spectral methods become more efficient. In this regime, compressibility makes the effective cost scale polylogarithmically with the number of frequency modes, yielding an up-to exponential runtime advantage over dense implementations.

Given a compressed QTT representation of the CDF, the exponential memory savings translate into substantial computational advantages in post-processing: the wall-clock time to compute VaR and ES scales polylogarithmically with resolution (Fig.~\ref{fig:poisson_qtt_cdf_performance}b; see also Sect.~\ref{sec:qtt-postprocessing}).

The weighted Poisson--binomial model considered here serves as a minimal representative of more general discrete models. In general, increasing the local cardinalities $K_d$ enlarges the support to $\prod_d K_d$, maintaining an exponential growth in $D$ with a larger effective base. Spectrally, this manifests as a more aggressive multiplicative suppression of high-frequency quasiperiodic modulations, allowing the smooth spectral envelope to dominate at significantly smaller system sizes. Consequently, the aggregate bond dimensions are expected to saturate after fewer local Hadamard products, ensuring that the performance advantages observed here extend broadly to more general sums of discrete random variables.

\subsection{Continuous models}
\label{sec:results-continuous}

We now consider continuous models, focusing on weighted sums of independent lognormal random variables. A lognormal random variable $Y$ is governed by the density
\begin{equation}
    f_{Y}(y) = \frac{1}{y \sigma \sqrt{2 \pi}} \exp\left( -\frac{(\log y - \mu)^2}{2 \sigma^2} \right), \quad y > 0,
    \label{eq:f_X_d_lognormal}
\end{equation}
which is characterized by a heavy, sub-exponential right tail.

Approximating the distribution of sums of lognormal variables poses a classical, well-known computational challenge~\cite{fenton1960,dufresne2004}. Unlike Gaussian or Gamma variables, lognormal variables are not closed under the convolution in Eq.~\eqref{eq:f_X_convolution}, and the density of their sum admits no closed-form analytical expression. Moreover, their effective support grows exponentially with the variance of the independent components.

A further complication arises, since the lognormal distribution does not admit a closed-form expression for its characteristic function, which can only be written in an integral form
\begin{equation}
    \phi_Y(\omega) = \frac{1}{\sigma \sqrt{2\pi}}\int_{0}^{\infty} e^{i \omega y} \exp \left( -\frac{(\ln y - \mu)^2}{2 \sigma^2} \right) \frac{\mathrm{d}y}{y},
    \label{eq:phi_Y_lognormal}
\end{equation}
making its computation costlier. Moreover, evaluating Eq.~\eqref{eq:phi_Y_lognormal} with classical quadrature methods is numerically ill-posed due to the highly oscillatory factor $e^{i \omega y}$.

This instability can be tackled by deforming the integration contour in the complex plane. Following Gubner's approach~\cite{gubner2006}, Eq.~\eqref{eq:phi_Y_lognormal} can be expressed as
\begin{equation}
    \phi_Y(\omega) = \frac{e^{\pi^2/8 \sigma^2}}{\sqrt{\pi}} \int_{-\infty}^{\infty}\!\! e^{-z^2} \exp\left( -\left[ \frac{i \pi}{\sqrt{2}\sigma} z + \omega e^{\sqrt{2}\sigma z} \right] \right)  \mathrm{d}z,
    \label{eq:phi_Y_lognormal_gubner}
\end{equation}
where the Gaussian weight $e^{-z^2}$ regularizes the oscillatory behavior of the integrand. Although this expression assumes $\mu=0$, the mean is reintroduced by evaluating the function at a scaled frequency $\omega' = \omega e^\mu$. This representation is now well-suited for Gauss--Hermite quadrature, enabling stable evaluation of the characteristic function at prescribed frequencies.

\subsection{Single-variable analysis}
\label{sec:results-continuous-single}

We approximate the lognormal density in Eq.~\eqref{eq:f_X_d_lognormal} with the Fourier spectral method, illustrating the use of Gubner's formula~\eqref{eq:phi_Y_lognormal_gubner} and continuous QTT representations.

To encode the characteristic function in QTT format, one may resort to TCI as a general-purpose approach. However, TCI relies on heuristic adaptive sampling and can struggle with functions that are highly oscillatory or sharply peaked. This is exactly the case for Gubner's characteristic function, which is sharply peaked and oscillatory near $\omega=0$ but decays smoothly for larger $|\omega|$, becoming prone to convergence issues on deep QTT grids.

Fortunately, Gubner's representation allows a more direct and stable construction. By discretizing Eq.~\eqref{eq:phi_Y_lognormal_gubner} using $K$ Gauss--Hermite quadrature nodes, the characteristic function reduces to a finite weighted sum of $K$ exponential terms in $\omega$---typically, choosing $K=45$ nodes is sufficient for an accurate approximation~\cite{gubner2006}. Similarly to the discrete case in Eq.~\eqref{eq:phi_X_multinomial}, this sum admits an exact QTT representation with bond dimension $\chi=K$, which can be subsequently compressed via TT-SVD. Exploiting this explicit algebraic construction avoids the instabilities that may arise when using TCI for large values of $n$.

The convergence of the spectral reconstruction now depends on the interaction between the distribution parameters $(\mu,\sigma)$, the truncation window $b$, and the number of frequency samples $N$. If $b$ is too small, truncation of the right tail introduces wrap-around artifacts in the Fourier reconstruction, since the density does not match at the boundaries of the truncated domain. We therefore choose $b(\mu,\sigma)$ such that the probability mass beyond $b$ is below a prescribed tolerance $\delta$. Using the lognormal CDF gives
\begin{equation}
    b(\delta) = \exp \left(\mu + \sigma F_G^{-1}(1-\delta)\right),
    \label{eq:lognormal_b}
\end{equation}
where $F_G$ is the standard normal CDF. With this choice, the tail truncation is controlled by $\delta$, while the remaining approximation error depends on $N$.

The required resolution $N$ must be sufficient to resolve the spectrum, as derived rigorously in Section~\ref{sec:qtt-complexity}. However, since the characteristic function has no closed-form expression and its decay cannot be analyzed directly, we use a heuristic estimate based on the spatial resolution of the density. The sharpest feature occurs near the mode $y':=e^{\mu - \sigma^2}$. Requiring the grid spacing to resolve this feature requires the number of samples to scale as
\begin{equation}
    N = \O{\frac{b}{y'}} = \O{\exp \left[\sigma F_G^{-1}(1-\delta) + \sigma^2\right]},
\end{equation}
i.e., exponentially with $\sigma^2$. Hence, accurately resolving high-volatility lognormal densities requires prohibitively large computational resources for dense FFT-based solvers, whereas QTT representations provide a scalable alternative.

\begin{figure}[t]
    \centering
    \includegraphics{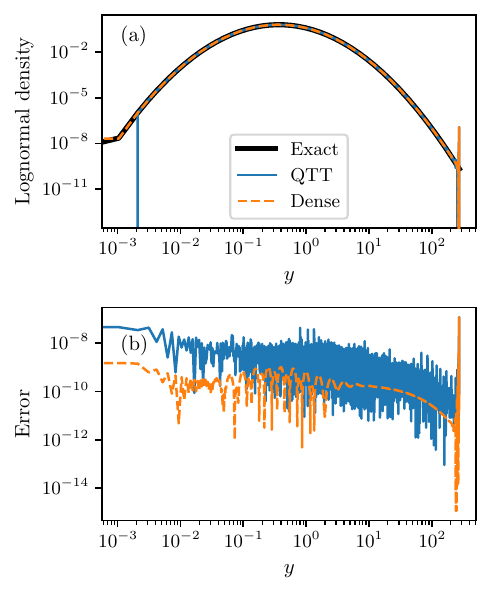}
    \caption{Spectral reconstruction of a single lognormal density in Eq.~\eqref{eq:f_X_d_lognormal} with $\mu=0$, $\sigma=1$, and tail tolerance $\delta=10^{-8}$ ($b \approx 273$). (a) Density function, reconstructed using dense-vector and QTT spectral methods. (b) Reconstruction error for both representations.}
    \label{fig:lognormal_density}
\end{figure}
As an illustration, the spectral reconstructions of a lognormal density with $\mu=0$, $\sigma=1$, and tail tolerance $\delta=10^{-8}$ are shown for both dense and QTT methods in Fig.~\ref{fig:lognormal_density}a, with the corresponding reconstruction errors in Fig.~\ref{fig:lognormal_density}b.

\begin{figure*}[t]
    \centering
    \includegraphics[width=\textwidth]{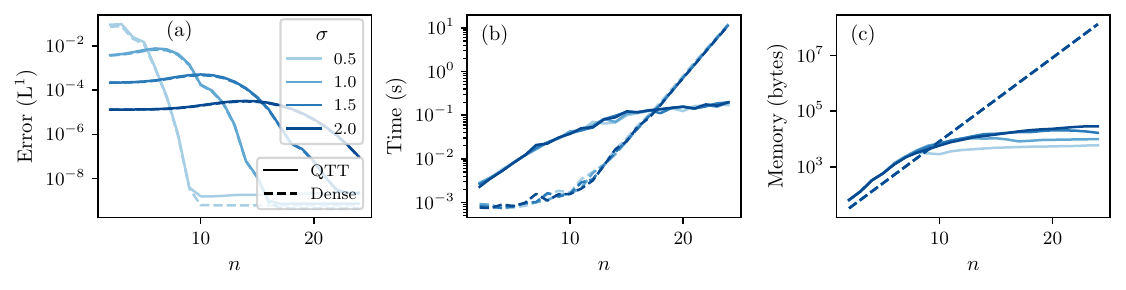}
    \caption{Performance comparison between QTT and dense-vector spectral methods for the computation of a single lognormal density function. (a) $\mathrm{L}^1$ reconstruction error for several distinct variances shown in the legend. (b) Wall-clock time for increasing resolutions $n = \log_2 N$. (c) Memory usage (in bytes).}
    \label{fig:lognormal_single_variable}
\end{figure*}
We next benchmark dense and QTT representations for several values of the variance. The $\mathrm{L}^1$ errors closely match across methods (Fig.~\ref{fig:lognormal_single_variable}a), confirming that the analytical QTT encoding accurately reconstructs the density from Eq.~\eqref{eq:phi_Y_lognormal_gubner}. The computational costs differ substantially: the bond dimensions remain bounded for all values of $\sigma$ (not shown), so QTT representations exhibit a clear polylogarithmic scaling with $N$ in both wall-clock time and memory (Figs.~\ref{fig:lognormal_single_variable}b,c).

\subsection{Weighted-sum analysis}
\label{sec:results-continuous-sum}

We now consider weighted sums of independent lognormal variables, defined as
\begin{equation}
    X_\rm{Log} = \sum_{d=1}^D w_d Y_d.
    \label{eq:X_LS}
\end{equation}

Although $X_\rm{Log}$ no longer has a closed-form expression for its density, several methods exist to estimate its support and determine the truncated support size $b$. A common approach is the Fenton--Wilkinson approximation, which matches the first two moments to model $X_\rm{Log}$ as a proxy lognormal distribution~\cite{fenton1960}. However, this method severely underestimates the extreme right tail, becoming unreliable when high precision is required.

Instead, we exploit the fact that the lognormal distribution is subexponential, so extreme sum values are typically driven by the single largest component rather than the collective contribution of multiple variables. Consequently, the tail of their sum is asymptotically equivalent to the tail of the largest component~\cite{asmussen2008}. In particular, for a sufficiently large threshold $b$, the survival function of the weighted sum~\eqref{eq:X_LS} is well approximated by
\begin{equation}
    \Pr(X_\rm{Log} > b) \approx \sum_{d=1}^D \Pr(w_d Y_d > b).
\end{equation}
Since $\log(w_d Y_d) \sim \cal{N}(\mu_d + \log w_d, \ \sigma_d^2)$, we enforce a prescribed tail tolerance $\delta$ by determining the boundary $b$ as the root of the strictly monotone equation
\begin{equation}
    \sum_{d=1}^D \left[ 1 - F_G\left(\frac{\ln b - \mu_d - \ln w_d}{\sigma_d}\right) \right] = \delta,
\end{equation}
which can be solved by a bisection method. This relation becomes accurate for sufficiently small values of $\delta$ and depends on the size of the component variances $\sigma_d$, since larger variances promote the dominance of a single extreme component.

Because the true density of $X_\rm{Log}$ is not available, we assess convergence through a self-convergence criterion. Let $f^{n}(y)$ denote the reconstructed density on a grid with $N=2^n$ points. Fixing the domain $[0, b]$ and increasing $n$, we define the self-error as the difference between successive resolutions using the discrete $\rm{L}^2$ norm, $\|f^{n} - f^{n-1}\|_{\rm{L}^2}$. To ensure consistent dimensionality, the fine-grid reconstruction $f^{n}$ is restricted to the coarse-grid nodes of $f^{n-1}$ by subsampling the even nodes of $f$. We assume the $\rm{L}^2$ norm since it enables an efficient computation in QTT format through tensor contractions; however, convergence in $\rm{L}^2$ implies convergence in $\rm{L}^1$ via the Cauchy--Schwarz inequality, so monitoring the decay of the self-error provides an adaptive convergence criterion for the spectral reconstruction.

\begin{figure*}[t]
    \centering
    \includegraphics[width=\linewidth]{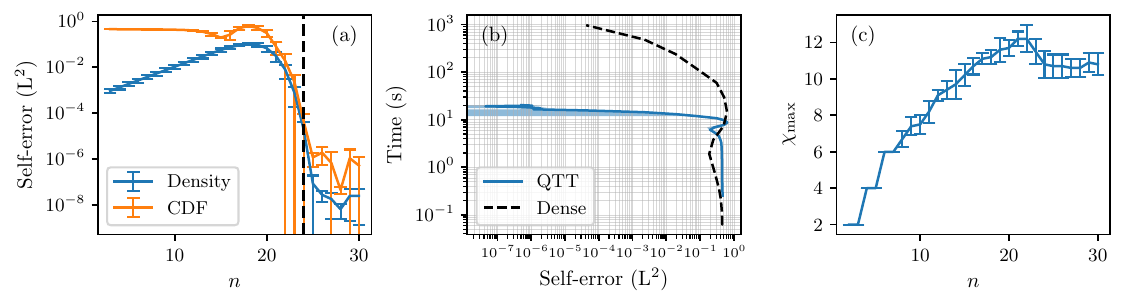}
    \caption{Performance comparison between QTT and dense-vectors for the evaluation of a weighted sum of $D=100$ lognormal variables. The results are averaged across $10$ random instances.
    (a)~Self-error for the density $f^n$ (absolute $\mathrm{L}^2$ norm) and CDF $F^n$ (relative) for increasing resolutions. Dense and QTT results match exactly, so the dense curves are omitted for clarity. The vertical dashed line indicates the largest dense resolution achieved ($n=24$).
    (b)~Wall-clock time versus CDF self-error, comparing dense (black, dashed) and QTT (blue, solid) implementations.
    (c)~Maximum bond dimension of the QTT CDF, saturating at $n \approx 20$ and indicating strong compressibility.
    }
    \label{fig:lognormal_weighted_sum}
\end{figure*}
Numerical results for a sum of $D=100$ independent lognormal components are reported in Fig.~\ref{fig:lognormal_weighted_sum}. The tail tolerance is fixed at $\delta = 10^{-10}$ to ensure the domain $b$ encompasses the deep right tail. To test robustness, we consider 10 independent instances with parameters sampled from $\mu_d \sim \cal{U}(-1.0, 1.0)$, $\sigma_d \sim \cal{U}(1.0, 3.0)$, and weights $w_d \sim \cal{U}(0.0, 1.0)$ normalized to unity. 

The self-convergence of the density and CDF follows two phases (Fig.~\ref{fig:lognormal_weighted_sum}a): for low $n$, the error initially increases as the grid begins to resolve the features of the density mode $y'$. Once this resolution threshold is surpassed, the error monotonically plunges to the TT-SVD truncation floor $\epsilon = 10^{-8}$. 

While the dense and QTT-based reconstructions exhibit the same convergence rate, their performance differs significantly. Dense vectors become computationally impractical beyond $n=24$ due to the exponential growth of the memory requirements. By contrast, the QTT format effortlessly handles discretizations up to $n=30$---a billion frequency modes---and beyond on standard hardware, entirely circumventing the memory constraints of dense-vector approaches.

Besides enabling otherwise intractable resolutions, the QTT approach is substantially more efficient. The maximal bond dimension saturates at a low constant as $n$ increases (Fig.~\ref{fig:lognormal_weighted_sum}c), leading to massive memory compression reflected in the efficiency benchmark (Fig.~\ref{fig:lognormal_weighted_sum}b): the QTT approach follows a polylogarithmic scaling in $N$, providing up to exponential speedups in time and memory over dense-vector methods and several orders of magnitude higher precision before reaching the hardware's memory ceiling.

\section{Conclusion}
\label{sec:conclusion}

The central finding of this work is that characteristic functions of weighted-sum models exhibit low-rank structure in the quantics tensor train (QTT) representation. This structure arises from spectral smoothness or CLT-driven spectral concentration, independently of whether the underlying distribution is Gaussian. By exploiting this compressibility within a Fourier spectral framework, we achieve up-to exponential computational advantages over dense methods for computing the fully non-Gaussian target distribution, its CDF, and derived risk measures such as Value at Risk and Expected Shortfall.

The framework applies uniformly to discrete and continuous models through complementary compressibility mechanisms. In discrete models, multiplicative suppression of high-frequency quasiperiodic modulations concentrates spectral energy into a low-rank envelope as the number of components grows. This is a CLT-driven effect that does not require the target distribution to be near Gaussian. In continuous models, the intrinsic spectral smoothness of the local characteristic functions yields compressibility directly. Both mechanisms were validated on weighted sums of Bernoulli and lognormal variables, achieving orders-of-magnitude memory compression and polylogarithmic runtime scaling.

Several directions remain open. Identifying broader classes of probabilistic models with intrinsically low-rank characteristic functions would clarify the scope of the approach. Extensions to weakly correlated variables and to higher-dimensional settings are two natural next steps, as tensor-network representations may offer one of the few scalable strategies in that regime. Parallel implementations using binary-tree reduction can further exploit the embarrassingly parallel structure of the Hadamard product sequence. More broadly, this work establishes that fully non-Gaussian probability distributions of widespread weighted-sum models are computationally accessible at scale, through a tensor-network framework whose efficiency is governed by spectral compressibility rather than proximity to Gaussianity.

\section*{Code and data availability}

The numerical experiments and benchmarks reported in this work were performed on a workstation equipped with an 18-core Intel(R) Xeon(R) W-2295 CPU and 64GB of RAM.

The data and source code required to reproduce the numerical results and figures presented in this manuscript are openly available on GitHub (see Ref.~\cite{aldavero2026_github_repo}). The implementation relies on the open-source tensor network library SeeMPS~\cite{seemps2026}.

\section*{Acknowledgments}
This work was supported by the EuroHPC Joint Undertaking under project QEC4QEA (Ref. 101194322), and by project PCI2025-163266 funded by MICIU/AEI/10.13039/501100011033. The authors also acknowledge support from CSIC'S PRO-ERC AGAIN 2025 funding. JJGR acknowledges Andrea Cadarso for insightful discussions on the problem formulation and its applications in finance.

\appendix

\section{Conventional numerical methods}
\label{appendix:conventional}

We now briefly review conventional numerical methods for evaluating the CDF of weighted-sum models defined in Eq.~\eqref{eq:X}, focusing on Monte Carlo sampling and recursive convolution methods.

\subsection{Monte Carlo sampling}
\label{appendix:conventional-montecarlo}

Monte Carlo sampling is a widespread method for evaluating generic probabilistic models by drawing i.i.d. samples from their random variable $X$~\cite{robert1999}. Given $S$ samples $\{X^{(s)}\}_{s=1}^S$, the CDF is approximated by the empirical estimator
\begin{equation}
    \widehat{F}_X(x) := \frac{1}{S}\sum_{s=1}^S \Theta(x - X^{(s)}),
    \label{eq:mc_estimator}
\end{equation}
where $\Theta$ denotes the Heaviside step function, acting as an indicator function. By construction, the estimator has a variance
\begin{equation}
    \rm{Var}\left[\widehat{F}_X(x)\right] = \frac{F_X(x)\left(1-F_X(x)\right)}{S}.
    \label{eq:mc_variance}
\end{equation}
Consequently, the standard deviation decays at the dimension-independent rate $\mathcal{O}(S^{-1/2})$. However, the relative error, defined as
\begin{equation}
    \varepsilon_{\rm{rel}} := \frac{\sqrt{\rm{Var}[\widehat F_X(x)]}}{F_X(x)}
    \label{eq:mc_relative_error}
\end{equation}
decays as $[S F_X(x)]^{-1/2}$ in the left tail and $[S \ (1-F_X(x))]^{-1/2}$ in the right tail. As a result, accurate estimation of rare-event probabilities and tail-dependent quantities (where $F(x)$ becomes vanishingly small) requires a prohibitively large number of samples, which scales as $S = \O{F_X^{-1} \varepsilon_{\rm{rel}}^{-2}}$. This limitation is particularly severe for heavy-tailed or slowly decaying distributions, where the CDF is nearly flat over wide regions and small sampling fluctuations translate into large uncertainties in quantile location.

\subsection{Recursive convolution methods}
\label{appendix:conventional-recursive}

In the discrete case, sums of categorical variables $X_d$~\eqref{eq:X_d_categorical} can be evaluated exactly. A naive combinatorial approach, which considers all possible configurations of the model, scales exponentially with the number of components $D$, and is therefore restricted to very small problem sizes. 

An efficient alternative is the recursive convolution method, which incorporates the components recursively to update the probability mass function (PMF) of the running partial sum~\cite{panjer1981, wang1993}. Let
\begin{equation}
    f^{(\ell)}(x_m) := \Pr \left(\sum_{d=1}^{\ell} w_d X_d = x_m\right),
    \label{eq:f_ell}
\end{equation}
denote the PMF of the model after incorporating the first $\ell$ components, initialized as $f^{(0)}(0)=1$ and $f^{(0)}(x_m)=0$ for $x_m\neq 0$. Conditioning on the $(\ell+1)$-th component yields the recursion
\begin{equation}
    f^{(\ell + 1)}(x_m) = \sum_{k=1}^{K_{\ell + 1}} p_{k (\ell + 1)} f^{(\ell)}\left(x_m - w_{\ell + 1} x_{k (\ell+1)}\right),
    \label{eq:f_ell_recursion}
\end{equation}
where $x_{k(\ell+1)}$ and $p_{k(\ell+1)}$ denote respectively the value and probability of the $k$-th outcome of the categorical variable $X_{\ell+1}$, efficiently updating the PMF by shifting and weighting the current distribution.

Its computational cost is governed by the size of the intermediate supports. Let $M_\ell$ denote the size of the support of $f^{(\ell)}$. A direct implementation of Eq.~\eqref{eq:f_ell_recursion} requires $\O{\sum_{\ell=0}^{D-1} K_{\ell+1} M_\ell}$ operations and $\O{M_D}$ memory. In the absence of degeneracies, the final support size may grow as $\prod_{d=1}^D K_d$, leading to exponential complexity. Consequently, recursive convolution methods are practical only for highly structured models, remaining impractical for generic models with incommensurate weights.

\section{Frequency discretization}
\label{appendix:frequency_discretization}

In numerical implementations, the continuous frequency variable $\omega$ is discretized on a finite uniform grid of $N$ modes, given by
\begin{equation}
    \omega_k  = -\Omega + k \Delta \omega, \quad \Delta \omega = \frac{2 \Omega}{N}, \quad k = 0, 1, \ldots, N-1,
    \label{eq:omega_k}
\end{equation}
with cutoff $\Omega$. The parameters $\Omega$ and $N$ determine the accuracy of the spectral reconstruction, which are constrained by classical sampling relations.

\paragraph{Continuous densities.}
In continuous scenarios, these parameters are related through the Shannon--Nyquist sampling theorem, which we state without proof~\cite{shannon2006}.
\begin{theorem}{Shannon--Nyquist sampling theorem}\\
    Let $f(x)$ be a real-valued function whose Fourier transform $\phi(\omega)$ is supported on $[-\Omega,\Omega]$. Then $f$ is uniquely determined by its samples $f(x_k)$ at $x_k = k \Delta x$, $k\in\mathbb Z$, provided
    \[
    \Delta x \le \frac{\pi}{\Omega}.
    \]
    \label{thm:shannon-nyquist}
\end{theorem}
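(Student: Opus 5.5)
The plan is the classical Whittaker--Kotelnikov--Shannon argument: expand the band-limited transform in a Fourier series on its compact support, identify the coefficients with the samples, and conclude that the samples determine $\phi$, hence $f$. Throughout I assume $f \in \rm{L}^2(\bb{R})$ (or $\rm{L}^1$ with $\phi \in \rm{L}^2$), so that the inversion formula
\[
f(x) = \frac{1}{2\pi}\int_{-\Omega}^{\Omega} \phi(\omega)\, e^{-i\omega x}\, \rm{d}\omega
\]
holds pointwise. This uses the paper's sign convention $\phi(\omega) = \int f(x) e^{i\omega x}\,\rm{d}x$. Since $\phi$ vanishes outside $[-\Omega,\Omega]$, $f$ extends to an entire function and is continuous, so pointwise sampling is meaningful.

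\textbf{Step 1: reduce to the case $\Delta x = \pi/\Omega'$.} Put $\Omega' := \pi/\Delta x$. The hypothesis $\Delta x \le \pi/\Omega$ gives $\Omega' \ge \Omega$, so $\phi$ is also supported in $[-\Omega',\Omega']$. It therefore suffices to treat the critical spacing with $\Omega'$ in place of $\Omega$.

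\textbf{Step 2: Fourier series and identification of coefficients.} On $[-\Omega',\Omega']$ the functions $\{e^{i\omega k\Delta x}\}_{k\in\bb{Z}}$ are $e^{i\pi k\omega/\Omega'}$. They form an orthogonal basis of $\rm{L}^2[-\Omega',\Omega']$, each with squared norm $2\Omega'$. Expanding $\phi$ in this basis gives
\[
\phi(\omega) = \sum_{k\in\bb{Z}} c_k\, e^{i\omega k \Delta x}.
\]
The coefficients are
\[
c_k = \frac{1}{2\Omega'}\int_{-\Omega'}^{\Omega'} \phi(\omega) e^{-i\omega k\Delta x}\,\rm{d}\omega = \frac{\pi}{\Omega'}\, f(x_k) = \Delta x\, f(x_k),
\]
where the middle equality is the inversion formula evaluated at $x = x_k$. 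Consequently, if two admissible functions share all samples $f(x_k)$, their transforms have identical Fourier coefficients. By completeness of the exponential system, the transforms agree in $\rm{L}^2$, and by Fourier inversion and continuity the two functions agree everywhere. This establishes uniqueness.

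\textbf{Step 3: explicit reconstruction.} Substitute the series into the inversion formula and exchange sum and integral, which is justified by $\rm{L}^2$ convergence together with Cauchy--Schwarz against $e^{-i\omega x}\mathbf{1}_{[-\Omega',\Omega']}$. This yields the cardinal series
\[
f(x) = \sum_k f(x_k)\,\frac{\sin\!\left(\Omega'(x-x_k)\right)}{\Omega'(x-x_k)}.
\]

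\textbf{Main obstacle.} The only delicate point is the regularity class. One needs $\phi \in \rm{L}^2$ for the Fourier series argument and a pointwise inversion formula so that the samples are well defined. Both follow from the Paley--Wiener setting, so I would state that assumption explicitly rather than prove it.
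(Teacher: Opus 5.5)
The paper gives no proof of this theorem. It is stated ``without proof'' in Appendix~\ref{appendix:frequency_discretization} with a citation to Shannon, and is used only to motivate $N \propto \Omega$. There is therefore no argument in the paper to compare against.

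Your proposal is the standard Whittaker--Kotelnikov--Shannon proof, and it is correct. The identity $c_k = \Delta x\, f(x_k)$ follows from the inversion formula with the paper's sign convention $\phi(\omega)=\int f(x)e^{i\omega x}\,\mathrm{d}x$. Completeness of $\{e^{i\pi k\omega/\Omega'}\}_{k\in\mathbb{Z}}$ in $\mathrm{L}^2[-\Omega',\Omega']$ then gives uniqueness. The cardinal series follows from $\int_{-\Omega'}^{\Omega'} e^{-i\omega t}\,\mathrm{d}\omega = 2\sin(\Omega' t)/t$. Your justification for exchanging sum and integral, via $\mathrm{L}^2$ convergence paired with $e^{-i\omega x}\mathbf{1}_{[-\Omega',\Omega']}$, is sound.

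Your explicit Paley--Wiener ($\mathrm{L}^2$) hypothesis is more than a technicality. It is exactly what makes the boundary case $\Delta x = \pi/\Omega$, which the paper includes via ``$\le$'', true. Read literally, with no integrability class and closed support $[-\Omega,\Omega]$, the statement is false. Take $f(x)=\sin(\Omega x)$: it is real and bounded, and its distributional Fourier transform is supported on $\{\pm\Omega\}\subset[-\Omega,\Omega]$. Yet $f(k\pi/\Omega)=0$ for every $k$, so its samples cannot distinguish it from $f\equiv 0$.

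In your setting $\phi\in\mathrm{L}^2$ puts no mass at the endpoints. That is why the Fourier-series argument on $[-\Omega',\Omega']$ still works at equality. Outside $\mathrm{L}^2$ one must instead take strict inequality, or require the support to lie in the open interval. It is worth recording this as a remark next to your regularity assumption.

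Separately, your reconstruction uses samples at all $k\in\mathbb{Z}$. A nonzero band-limited $\mathrm{L}^2$ function is entire, so it cannot be supported on a finite interval of length $L$. The paper's finite count in Eq.~\eqref{eq:N_nyquist} is therefore a heuristic or periodized analogue, not a direct consequence of the theorem you proved.
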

Given a distribution defined on an interval of length $L$, with spacing $\Delta x = \frac{L}{N}$, resolving spectral content up to bandwidth $\Omega$ requires
\begin{equation}
    N(\Omega) = 1 + \left\lceil \frac{L \Omega}{\pi} \right\rceil
    \label{eq:N_nyquist}
\end{equation}
frequency modes. When the distribution is not band-limited but presents a decaying spectrum, the approximation error decays monotonically with increasing $\Omega$ as more spectral content is captured.

\paragraph{Discrete models.}
In the discrete case, an analogous constraint arises through periodicity. A PMF defined on a uniform grid with spacing $\Delta x$ presents a periodic spectrum with period $2\pi/\Delta x$. Hence, choosing $\Omega=\frac{\pi}{\Delta x}$ suffices to capture all spectral content, enabling exact reconstruction. This is in direct analogy with the Nyquist condition in Theorem~\ref{thm:shannon-nyquist}.

However, generic discrete models are often defined on non-uniform grids, with no common spacing. The resulting spectrum is no longer periodic, but quasiperiodic, exhibiting non-decaying oscillatory structure. Consequently, no finite cutoff $\Omega$ suffices for exact reconstruction, and truncation produces persistent Gibbs oscillations. This motivates the use of spectral filtering to induce a smooth spectrum decay and control high-frequency artifacts.

As an illustrative example, consider the characteristic function of the weighted Poisson--binomial model~\eqref{eq:phi_X_multinomial}. Its modulus exhibits a quasiperiodic structure
\begin{equation}
    |\phi_{\rm{WPB}}(\omega)| = \prod_{d=1}^D 
        \sqrt{p_d^2+(1-p_d)^2+2p_d(1-p_d)\cos(\omega w_d)}.
    \label{eq:phi_X_wpb_envelope}
\end{equation}
Near $\omega=0$, this magnitude is bounded by a Gaussian envelope parameterized by the model's variance,
\begin{equation}
    |\phi_{\rm{WPB}}(\omega)| \ \le \ \exp\left(-\tfrac{2\sigma_{\rm{WPB}}^2}{\pi^2}\omega^2\right),
    \qquad |\omega|\le \tfrac{\pi}{w_{\max}}.
    \label{eq:phi_X_wpb_envelope_lobe}
\end{equation}
The extent of this envelope relative to $\Omega$ scales as $\O{D}$, independent of the weight normalization and the specific values of the weights. Beyond this envelope, for $|\omega| > \frac{\pi}{w_{\max}}$, the incommensurate cosine terms generate quasiperiodic modulations that mathematically prevent exact reconstruction at any finite $\Omega$.

However, a key mechanism governs the QTT compressibility of these discrete models. Although these modulations could in principle reach $\O{1}$ amplitudes, this requires increasingly unlikely phase realignments. In fully incommensurate cases, the phases rapidly decorrelate, and since $ 0 \leq |\phi_d| \leq 1$, the product of $D$ independent, out-of-phase terms drives their typical amplitude to decay exponentially in $D$. This multiplicative suppression pushes high-frequency contributions below the truncation tolerance, allowing tensor-network methods to discard them and isolate the low-rank central envelope.

\section{Spectral filtering convergence}
\label{appendix:filtering_convergence}

Spectral filtering provides rigorous convergence guarantees for Fourier reconstructions of piecewise-continuous functions. A fundamental result, due to Gottlieb and Shu, quantifies their accuracy sufficiently away from the discontinuities~\cite{gottlieb1997}. Let us state a simplified version without proof.
\begin{theorem}{Spectral filtering convergence theorem} \\
    Let $F(x)$ be a piecewise $C^q$ function on a periodic domain of length $L$, with a single jump discontinuity at $\xi$. Let $\sigma(\eta)$ be a real spectral filter of order $q$, satisfying:
    \begin{enumerate}
        \item $\sigma(0) = 1$ and $\sigma^{(\ell)}(0)=0$ for $1 \leq \ell \leq q-1$,
        \item $\sigma(\eta) = 0$ for $|\eta| \geq 1$,
        \item $\sigma\in C^{q-1}(\mathbb R)$.
    \end{enumerate}
    Define the filtered partial sum
    \[ 
    F^{N,\sigma}(x) \ = \ \sum_{k=-\nu}^{\nu} \hat F_k \sigma \left(\tfrac{k}{\nu}\right) e^{ik\tfrac{2\pi}{L}x},
    \]
    where $N = 2 \nu + 1$ Fourier modes are used. Let
    \[
    d(x) = \min_{k=-1,0,1} |x - \xi + L k|
    \]
    denote the distance from $x$ to the discontinuity modulo periodicity. Then there exists a constant $A$, independent of $N$, such that
    \[
    |F(x) - F^{N, \sigma}(x)| \leq A \ N^{1-q} \ d(x)^{1-q} \ \|F\|_{H^q},
    \]
    where $\|F\|_{H^q}$ is the local Sobolev $q$-norm over the smooth subintervals.
    \label{thm:spectral_filtering}
\end{theorem}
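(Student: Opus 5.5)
The plan is to write the filtered partial sum as a convolution with a filtered Dirichlet kernel and to split that convolution into a near-field and a far-field contribution around the evaluation point $x$. Writing $\hat F_k$ as an integral of $F$ and exchanging sum and integral gives
\[
F^{N,\sigma}(x) = \frac{1}{L}\int_{0}^{L} F(x-y)\, K_\nu(y)\,\mathrm{d}y, \qquad K_\nu(y) := \sum_{k=-\nu}^{\nu} \sigma\left(\tfrac{k}{\nu}\right) e^{ik\frac{2\pi}{L}y}.
\]
Since $\sigma(0)=1$, we have $\frac{1}{L}\int K_\nu = 1$. The error can therefore be written as $F(x)-F^{N,\sigma}(x) = \frac{1}{L}\int \left[F(x)-F(x-y)\right] K_\nu(y)\,\mathrm{d}y$. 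Fix $\delta = d(x)/2$. On $|y|<\delta$ the function $F(x-\cdot)$ is $C^q$, because the jump at $\xi$ lies outside this window. On $|y|\ge\delta$ only crude bounds on $F$ are needed.

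The first key step is a far-field decay estimate for the kernel:
\[
|K_\nu(y)| \le C\, \nu^{1-q}\, |y|^{-q} \quad \text{for } 0<|y|\le L/2.
\]
I would prove this by $q$-fold summation by parts in $k$. Each Abel summation produces a factor $(e^{i2\pi y/L}-1)^{-1} = \O{|y|^{-1}}$ and a finite difference of $\sigma(k/\nu)$. No boundary terms appear, because $\sigma$ and its derivatives up to order $q-1$ vanish at $|\eta|=1$ (conditions 2 and 3). Each difference gains a factor $\nu^{-1}$, and after $q$ steps one sums $2\nu+1$ terms of size $\O{\nu^{-q}}$, which gives $\nu^{1-q}$. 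With the far-field bound in hand, the contribution from $|y|\ge\delta$ is at most $\int_{|y|\ge\delta} 2\|F\|_\infty\, C\nu^{1-q}|y|^{-q}\,\mathrm{d}y \lesssim \nu^{1-q}\delta^{1-q}\|F\|_\infty$. This is exactly the claimed rate, and $\|F\|_\infty$ is controlled by the local Sobolev norm $\|F\|_{H^q}$ through embedding on each smooth piece.

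The second key step handles the near field $|y|<\delta$. I would Taylor-expand $F(x-y)$ to order $q-1$ with integral remainder. The polynomial terms lead to truncated moments $\int_{|y|<\delta} y^\ell K_\nu(y)\,\mathrm{d}y$ for $1\le \ell\le q-1$. Over the full period these moments are governed by the derivatives of $\sigma$ at $0$: the $\ell$-th moment of $K_\nu$ against $y^\ell$, suitably periodized, is proportional to $\sigma^{(\ell)}(0)\nu^{-\ell}$ modulo the contribution of $\hat{(y^\ell)}$ at $k\neq0$. Condition 1 makes the leading contribution vanish. The leftover tail parts $\int_{|y|\ge\delta}$ are again bounded by the far-field estimate, giving $\nu^{1-q}\delta^{\ell+1-q}$, which is no worse than the target. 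The Taylor remainder is $\O{|y|^q \|F^{(q)}\|}$. Integrated against $|K_\nu|$, it splits into two pieces:
\begin{itemize}
\item on $|y|\le 1/\nu$, $|K_\nu| \le N$, giving $\O{\nu^{-q}}$;
\item on $1/\nu<|y|<\delta$, the decay bound gives $\nu^{1-q}\int |y|^0\,\mathrm{d}y \lesssim \nu^{1-q}\delta$.
\end{itemize}
Since $\delta<L$, both pieces are absorbed into $A N^{1-q} d(x)^{1-q}\|F\|_{H^q}$ after collecting constants, using $N\sim 2\nu$.

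I expect the main obstacle to be the moment-cancellation step. The vanishing conditions $\sigma^{(\ell)}(0)=0$ live in frequency space, while the Taylor polynomial $y^\ell$ is not periodic. One has to show carefully that $\int y^\ell K_\nu$ over a period equals $\O{\nu^{1-q}}$ and not merely $\O{\nu^{-1}}$. My first attempt would be to expand $y^\ell$ on $[-L/2,L/2]$ in Fourier series and pair it termwise with $\sigma(k/\nu)$. Where that fails, I would instead insert a smooth cutoff $\chi(y/\delta)$ that localizes the Taylor polynomial into a genuinely $C^\infty$ periodic function. For such a function the filtered error is $\O{\nu^{-q}}$ by a direct Fourier argument that uses $1-\sigma(\eta)=\O{\eta^q}$ near $0$. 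The cutoff's derivatives cost powers of $\delta^{-1}$, and these produce precisely the $d(x)^{1-q}$ factor in the statement.
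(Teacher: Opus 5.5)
The paper does not prove this theorem; it quotes Vandeven's result from Gottlieb and Shu. The standard proof works in frequency space. One subtracts from $F$ a periodic piecewise polynomial carrying the jumps of $F^{(\ell)}$ at $\xi$, $0\le\ell\le q-1$, whose Fourier coefficients are explicit and proportional to $k^{-\ell-1}e^{-2\pi i k\xi/L}$. The remainder is then globally $H^q$, and its filtered error is $\mathcal{O}(N^{1/2-q})\|F^{(q)}\|_{\mathrm{L}^2}$ because $1-\sigma(\eta)=\mathcal{O}(|\eta|^q)$. The jump parts are handled by summation by parts against $G_\ell(\eta)=(\sigma(\eta)-1)/\eta^\ell$, which gives $N^{1-q}d(x)^{1-q}$ times the jump sizes.

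Your physical-space route (filtered kernel, near/far split at $d(x)/2$) is genuinely different and viable. Your kernel bound $|K_\nu(y)|\lesssim\nu^{1-q}|y|^{-q}$ plays the role of the $G_\ell$ summation by parts, and your local accuracy on smooth functions plays the role of the $H^q$-remainder estimate. Your version is more local: it needs only $\|F\|_\infty$ globally and smoothness near $x$, so several jumps cost nothing extra. Vandeven's decomposition instead gives sharper constants, stated explicitly in terms of the jumps of $F^{(\ell)}$.

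Three points must be repaired before your proof is valid.

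First, your far-field bound uses more regularity of $\sigma$ than the statement lists. Summation by parts on the zero-extended, finitely supported sequence $a_k=\sigma(k/\nu)$ never produces boundary terms, so conditions 2--3 do not enter there. What they buy is that the finite differences stay small across $|k|=\nu$. However, $|\Delta^q a_k|=\mathcal{O}(\nu^{-q})$ requires $\sigma^{(q-1)}$ to be Lipschitz, or at least $\sum_k|\Delta^q a_k|=\mathcal{O}(\nu^{1-q})$, i.e.\ $\sigma^{(q-1)}$ of bounded variation. With only $\sigma\in C^{q-1}$ you get $|K_\nu(y)|\lesssim\nu^{2-q}|y|^{1-q}$, which loses a full factor $N d(x)$. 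The same extra regularity is needed for $1-\sigma(\eta)=\mathcal{O}(|\eta|^q)$. Add it as a hypothesis: Vandeven's constants involve $q$-th derivatives of $(\sigma(\eta)-1)/\eta^\ell$, and the raised-cosine filters in the paper satisfy it.

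Second, drop your first attempt at moment cancellation; it is not a proof. Up to sign, $\frac1L\int_{-L/2}^{L/2}y^\ell K_\nu(y)\,\mathrm{d}y$ is the filtered error at $0$ of the periodized $y^\ell$, whose singularity sits at $\pm L/2$. Showing it is $\mathcal{O}(\nu^{1-q})$ is therefore the theorem itself at $d=L/2$. Termwise absolute bounds on Fourier coefficients decaying like $|k|^{-1}$ or $|k|^{-2}$ cannot do better than $\mathcal{O}(\nu^{-1})$.

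Your cutoff fallback is the right argument, and it closes:
\begin{itemize}
\item Take $h_\ell(y)=\chi(y/\delta)y^\ell$ with $\chi=1$ on $[-1,1]$ and supported in $[-2,2]$. This fits in one period since $2\delta\le L/2$.
\item Because $h_\ell(0)=0$, you have $\frac1L\int h_\ell K_\nu=\sum_k(\sigma(k/\nu)-1)\hat h_\ell(-k)$.
\item The bound $|\hat h_\ell(k)|\lesssim\delta^{\ell+1}\min\{1,(|k|\delta)^{-q-2}\}$ then yields $\mathcal{O}(\nu^{-q}\delta^{\ell-q})$.
\item The annulus $\delta\le|y|\le2\delta$ costs $\mathcal{O}(\nu^{1-q}\delta^{\ell+1-q})$ by your kernel bound.
\end{itemize}
Both contributions are $\lesssim\nu^{1-q}\delta^{1-q}$.

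Third, to land on $\|F\|_{H^q}$ rather than $\sup|F^{(q)}|$, bound the integral remainder by Cauchy--Schwarz: $|R(y)|\le|y|^{q-1/2}\|F^{(q)}\|_{\mathrm{L}^2}/(q-1)!$. Your split at $|y|=1/\nu$ then gives $\mathcal{O}(\nu^{1/2-q}+\nu^{1-q}\delta^{1/2})$. The first term mirrors Vandeven's $\sqrt{N}N^{-q}\|F^{(q)}\|_{\mathrm{L}^2}$ term, and both are absorbed into the claimed bound.

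With these repairs the argument is complete for $q\ge2$. For $q=1$, your far-field integral produces a logarithm and one more difference would be needed.
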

This estimate extends to functions with finitely many jump discontinuities by applying the bound locally on each smooth subinterval. The distance term $d(x)$ governs the error near the discontinuities, while the Sobolev term $\|F\|_{H^q}$ only contributes a finite, model-dependent prefactor.

This result implies spectral convergence away from discontinuities---where the distance term $d(x)$ is bounded away from zero---determined by the smoothness of the filter. Polynomial filters of order $q$ yield algebraic convergence $\O{N^{1-q}}$, while exponential filters achieve $\O{e^{-N}}$ convergence. However, near the discontinuities, Gibbs oscillations persist, precluding uniform convergence.

For self-containedness, let us derive a well-known global convergence rate in the $\rm{L}^p$ norm for the filtered CDF reconstruction defined in Eq.~\eqref{eq:F_N_sigma}. Following the Nyquist criterion~\eqref{thm:shannon-nyquist}, the parameters $N$, $\Omega$, and $\Delta x$ are linked via Eq.~\eqref{eq:N_nyquist}. Since $N \propto \Omega$, let us denote the pointwise error in terms of the frequency cutoff $\Omega$ as
\begin{equation}
    E_\Omega(x) \equiv F(x) - F^{\Omega, \sigma}(x) = \O{\Omega^{1-q} \ d(x)^{1-q}}.
\end{equation}

\begin{proposition}{Global $\rm{L}^p$ convergence of filtered spectral approximations.}

Let $F(x)$ be defined as in Theorem~\ref{thm:spectral_filtering} with $J$ jump discontinuities, and let $F^{\Omega, \sigma}(x)$ be its filtered spectral reconstruction as in Eq.~\eqref{eq:F_N_sigma} with a filter of order $q$ such that $p(q-1) > 1$. Then, the global error $E_\Omega(x)$ is bounded in the $\rm{L}^p$ norm by
\begin{equation}
    \|E_\Omega\|_{\rm{L}^p} = \O{\Omega^{-1/p}}.
    \label{eq:gibbs-band-rate}
\end{equation}
\label{prop:global_convergence}
\end{proposition}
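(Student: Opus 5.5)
The plan is to split the domain into a thin ``Gibbs band'' around each of the $J$ jump discontinuities and its complement. We bound the error differently on each region and choose the band width to balance the two contributions. Fix a constant $c>0$ and define the band $B_\Omega = \bigcup_{j=1}^J \{x : d_j(x) \le c/\Omega\}$, where $d_j(x)$ is the periodic distance to the $j$-th jump $\xi_j$. This band has total measure at most $2Jc/\Omega$.

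\emph{Inside the band.} We use only a uniform bound. The filtered reconstruction is a convolution of $F$ with the filter kernel $\check\sigma_\Omega$, whose $\rm{L}^1$ norm is bounded independently of $\Omega$; this is a standard property of admissible filters with $\sigma\in C^{q-1}$, $q\ge 2$. Since $F$ is a bounded CDF, $\|E_\Omega\|_{\rm{L}^\infty} \le \|F\|_{\rm{L}^\infty}(1+\|\check\sigma_\Omega\|_{\rm{L}^1}) = \O{1}$. Hence
\begin{equation*}
\int_{B_\Omega} |E_\Omega(x)|^p\,\rm{d}x \le C\,\frac{2Jc}{\Omega} = \O{\Omega^{-1}}.
\end{equation*}

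\emph{Outside the band.} We invoke Theorem~\ref{thm:spectral_filtering}, applied locally around each discontinuity as described after its statement. It gives $|E_\Omega(x)| \le A\,\Omega^{1-q}\sum_j d_j(x)^{1-q}$, up to the $\|F\|_{H^q}$ prefactor. Raising this to the $p$-th power and integrating over $d_j \ge c/\Omega$ reduces, for each $j$, to
\begin{equation*}
\Omega^{p(1-q)}\int_{c/\Omega}^{L/2} t^{p(1-q)}\,\rm{d}t .
\end{equation*}
The hypothesis $p(q-1)>1$ makes this integral converge at infinity and be dominated by its lower limit, so it is $\O{(c/\Omega)^{1-p(q-1)}}$. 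The product is therefore
\begin{equation*}
\Omega^{-p(q-1)}\,\Omega^{p(q-1)-1} = \O{\Omega^{-1}}.
\end{equation*}
Cross terms between different jumps are harmless: $(\sum_j a_j)^p \le J^{p-1}\sum_j a_j^p$.

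Adding the two regions gives $\|E_\Omega\|_{\rm{L}^p}^p = \O{\Omega^{-1}}$, and taking the $p$-th root yields Eq.~\eqref{eq:gibbs-band-rate}. The main obstacle is the uniform $\rm{L}^\infty$ bound inside the band. Theorem~\ref{thm:spectral_filtering} degenerates there, so we need a separate argument that the filtered Gibbs overshoot stays $\O{1}$ uniformly in $\Omega$. We obtain it by estimating $\|\check\sigma_\Omega\|_{\rm{L}^1}$ through integration by parts twice in $\eta$, which uses $\sigma\in C^{1}$ with compact support. Equivalently, we can cite the boundedness of filtered partial sums of bounded functions. A secondary point is that the constant $A$ from Theorem~\ref{thm:spectral_filtering} must hold uniformly for $d(x)\ge c/\Omega$. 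We take the theorem as stated with $A$ independent of $x$, which is exactly what the statement asserts.
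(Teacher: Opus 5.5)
Your proposal is correct and follows the paper's proof essentially step by step. It uses the same Gibbs bands of width $c/\Omega$ and total measure $2Jc/\Omega$ with an $\mathcal{O}(1)$ error inside, and applies Theorem~\ref{thm:spectral_filtering} outside, reducing to $\Omega^{p(1-q)}\int_{c/\Omega} r^{-p(q-1)}\,\mathrm{d}r=\mathcal{O}(\Omega^{-1})$ when $p(q-1)>1$. Beyond this, you justify the uniform overshoot bound, which the paper only asserts as proportional to the jump height, via uniform $\mathrm{L}^1$-boundedness of the filter kernel, and you keep explicit track of the multiple jumps. One small fix: a filter that is merely $C^1$ (possible when $q=2$) does not permit a second integration by parts, but $\widehat{\sigma}\in\mathrm{L}^1(\mathbb{R})$ still holds because a compactly supported $C^1$ function lies in $H^1$, so Cauchy--Schwarz against $(1+\xi^2)^{-1/2}\in\mathrm{L}^2$ suffices.
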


\begin{proof}
Let $\{\xi_j\}_{j=1}^J$ denote the locations of the $J$ jump discontinuities of $F$. Also, let us define the Gibbs-dominated regions
\begin{equation}
    \cal{N}_\Omega := \{x : d_j(x) < \delta\},  \qquad |\cal{N}_\Omega| = \frac{2 J c}{\Omega},
\end{equation}
with measure $|\cal{N}_\Omega|$, for a distance $\delta = \tfrac{c}{\Omega}$, with $c>0$ chosen such that neighboring regions do not overlap. Given their complement $\cal{R}_\Omega := [0, L] \ \setminus \ \cal{N}_\Omega$, the global error decomposes as
\begin{equation}
    \|E_\Omega\|_{L^p}^p = \int_{\cal{N}_\Omega} |E_\Omega|^p \rm{d}x + \int_{\cal{R}_\Omega} |E_\Omega|^p \rm{d}x.
    \label{eq:filtered_global_error}
\end{equation}
We are now able to bound the global error $\|E_\Omega\|_{\rm{L}^p}^p$ near discontinuities, where the first term dominates, and in smooth regions, dominated by the second integral.

\paragraph{Near discontinuities.} Within each Gibbs-dominated region, the spectral reconstruction exhibits a bounded overshoot proportional to the jump height. Let $c_0>0$ be a constant proportional to the maximum jump magnitude $\max_j |F(\xi_j^+) - F(\xi_j^-)|$. Then, we can bound the first term of Eq.~\eqref{eq:filtered_global_error} as
\begin{equation}
    \int_{\cal{N}_\Omega} |E_\Omega(x)|^p \rm{d}x \leq c_0^p |\cal{N}_\Omega| = \O{\Omega^{-1}}.
\end{equation}

\paragraph{Away from discontinuities.}
We evaluate the second term of Eq.~\eqref{eq:filtered_global_error} by invoking Theorem~\ref{thm:spectral_filtering}. By definition, polynomial filters of order $q \ge 1$ yield
\begin{equation}
    |E_\Omega(x)| \leq c_1 (\Omega \ d(x))^{1-q},
\end{equation}
for some constant $c_1>0$. The integral follows as
\begin{equation}
    \int_{\cal{R}_\Omega} |E_\Omega(x)|^p \rm{d}x \le c_1^p \Omega^{p(1-q)} \int_{\cal{R}_\Omega} d(x)^{p(1-q)}\rm{d}x.
\end{equation}
Because $d(x)$ measures the distance to the nearest discontinuity, evaluating its integral over the smooth region $\cal{R}_\Omega$ is equivalent to integrating the distance $r$ outwards from the boundary of the Gibbs region ($r = \delta$) up to a maximum macroscopic distance $\Delta = \O{1}$. Thus, the latter integral reduces to $\int_\delta^\Delta r^{-p(q-1)}\rm{d}r$, which evaluates to
\begin{equation}
    \int_\delta^\Delta r^{-p(q-1)}\rm{d}r =
    \begin{cases}
        \O{1}, & p(q-1) < 1, \\
        \O{\log \delta^{-1}}, & p(q-1) = 1, \\
        \O{\delta^{1-p(q-1)}}, & p(q-1) > 1.
    \end{cases}
\end{equation}
Since $\delta \sim \Omega^{-1}$, only the highest order case $p(q-1) > 1$ yields an integral of $\cal{O}(\Omega^{p(q-1)-1})$. Multiplied by the $\Omega^{p(1-q)}$ prefactor, this evaluates to
\begin{equation}
    \int_{\cal{R}_\Omega} |E_\Omega(x)|^p \rm{d}x = \O{\Omega^{-1}}.
\end{equation}
Assuming the filter order $q$ is chosen large enough such that $p(q-1) > 1$, the smooth region error scales exactly with the Gibbs band error. For exponential filters, the bound is sharper, but $\mathcal{O}(\Omega^{-1})$ remains the worst-case scaling from the Gibbs regions. 

Combining both contributions, we obtain the filter-independent bound
\begin{equation}
    \|E_\Omega\|_{\rm{L}^p} = \O{\Omega^{-1/p}},
    \label{eq:gibbs-band-rate}
\end{equation}
and, in particular, $\|E_\Omega\|_{\rm{L}^1}=\cal{O}(\Omega^{-1})$. This rate reflects the shrinking total measure of the Gibbs-dominated regions. By contrast, global $\rm{L}^\infty$ convergence is not achieved.
\end{proof}

\bibliographystyle{apsrev4-2}
\bibliography{references}

\end{document}